
\RequirePackage{amsthm}

\documentclass[sn-mathphys-ay]{sn-jnl}

\usepackage{placeins}
\usepackage{graphicx}%
\usepackage{multirow}%
\usepackage{amsmath,amssymb,amsfonts}%
\usepackage{mathrsfs}%
\usepackage[title]{appendix}%
\usepackage{xcolor}%
\usepackage{textcomp}%
\usepackage{manyfoot}%
\usepackage{booktabs}%
\usepackage{listings}%
\usepackage{enumitem}

\usepackage{lmodern}
\usepackage{nicefrac}       
\usepackage{microtype}      
\usepackage{lipsum}
\usepackage{bbm}
\usepackage{physics}
\usepackage{bm}
\usepackage[short]{optidef}
\newcommand{\ra}[1]{\renewcommand{\arraystretch}{#1}}
\newcommand{\RomanNumeralCaps}[1]
    {\MakeUppercase{\romannumeral #1}}
\DeclarePairedDelimiter\floor{\lfloor}{\rfloor}
\usepackage{subcaption}


\usepackage{tikz}
\usepackage{cases}
\usepackage{xfrac}
\usepackage{acronym}

\usepackage{algorithmic}
\usepackage[ruled,vlined]{algorithm2e}
\SetKwInput{KwInput}{Input}                
\SetKwInput{KwOutput}{Output} 
\usepackage{mathtools}





\theoremstyle{thmstyleone}%
\newtheorem{theorem}{Theorem}
\newtheorem{proposition}[theorem]{Proposition}%
\newtheorem{lemma}{Lemma}%

\theoremstyle{thmstylethree}%
\newtheorem{remark}{Remark}%

\raggedbottom

\newcommand{\jnm}[1]{\textcolor{black}{#1}}

\begin{document}

\title[Optimal Control of Fluid Restless Multi-armed
Bandits: A Machine Learning Approach]{Optimal Control of Fluid Restless Multi-armed
Bandits: A Machine Learning Approach}


\author*[1]{\fnm{Dimitris} \sur{Bertsimas}}\email{dbertsim@mit.edu}
\equalcont{These authors contributed equally to this work.}

\author[2]{\fnm{Cheol Woo} \sur{Kim}}\email{acwkim@mit.edu}
\equalcont{These authors contributed equally to this work.}

\author[3]{\fnm{Jos\'e} \sur{Ni\~no-Mora}}\email{jose.nino@uc3m.es}
\equalcont{These authors contributed equally to this work.}

\affil*[1]{\orgdiv{Sloan School of Management}, \orgname{Massachusetts Institute of Technology}, \orgaddress{\city{Cambridge}, \postcode{02142}, \state{MA}, \country{USA}}}

\affil[2]{\orgdiv{Operations Research Center}, \orgname{Massachusetts Institute of Technology}, \orgaddress{\city{Cambridge}, \postcode{02142}, \state{MA}, \country{USA}}}

\affil[3]{\orgdiv{Departamento de Estad\'{\i}stica}, \orgname{Universidad Carlos \RomanNumeralCaps{3} de Madrid}, \orgaddress{\city{Getafe}, \postcode{28903}, \state{Madrid}, \country{Spain}}}


\abstract{
We present a novel machine learning framework for the optimal control of fluid restless multi-armed bandit problems (FRMABPs) with state equations that are either affine or quadratic in the state variables. By establishing fundamental properties of FRMABPs, we develop an efficient numerical algorithm that generates a comprehensive training set by solving multiple instances with diverse initial states. We further enhance this training set by applying a nonlinear transformation to the feature vectors, leveraging structural properties of FRMABPs. A time-dependent state feedback policy is then learned using Optimal Classification Trees with hyperplane splits (OCT-H). We test our approach on machine maintenance, epidemic control, and fisheries control problems, demonstrating that our method yields high-quality state feedback policies. Furthermore, once a policy is learned, it achieves a speed-up of up to 26 million times compared to the direct numerical algorithm.
\vspace{1.5em} \\
\noindent{\textbf{Publication note:} Accepted for publication in \emph{Machine Learning}, 115, 70 (2026). Final version available at \url{https://doi.org/10.1007/s10994-026-07022-0}.}}

\keywords{Restless Multi-Armed Bandit, Fluid Approximation, Optimal Control, Decision Trees}



\maketitle

\section{Introduction}
\label{sec:intro}
We  \jnm{consider a} continuous-time, continuous-state deterministic formulation of \jnm{the \emph{restless multi-armed bandit problem} (RMABP)}, which we refer to as the \jnm{fluid RMABP} (FRMABP). \jnm{RMABPs were} introduced \jnm{in} \citep{whit88} \jnm{as discrete-state stochastic control problems, typically formulated as \emph{Markov decision processes} (MDPs).  These models address optimal sequential resource allocation among multiple projects, where each project may operate in an active or a passive mode at any time, subject to a constraint on the number of projects that can be active concurrently. The state of each project evolves stochastically under different dynamics and yields different rewards depending on the chosen mode, and the goal is to determine a control policy that optimizes the discounted or long-run average expected reward over a time horizon.} The model has \jnm{generated a vast literature due to its } numerous real-world applications, which include healthcare \citep{health}, machine maintenance \citep{abbuMakis19}, and wireless communication \citep{wireless}, among others. For a recent review of the field, see \citep{nmmath23}.

\jnm{Yet, solving RMABPs} to optimality is PSPACE-hard \citep{paptsi99}.
Consequently, the standard approach is to rely on suboptimal heuristic policies, among which the \emph{Whittle index policy} \citep{whit88} is the most widely applied. However, this heuristic is applicable only to the limited class of \emph{indexable} models - those for which a Whittle index exists. Furthermore, although these heuristics are proven to be asymptotically optimal under certain conditions \citep{wewe90,gastetal24}, the required conditions are often notoriously hard to verify.

To mitigate the complexity of solving discrete-state stochastic control problems, it is common to study their \emph{fluid} (continuous-state, deterministic) relaxations, which are typically more tractable analytically. This paradigm is classical in the analysis and control of \emph{multiclass queueing networks} (MCQNs): the stability of a stochastic MCQN can be inferred from the stability of its fluid model \citep{dai1995,Stolyar1995}. Moreover, fluid MCQNs provide effective surrogates for policy design in the original stochastic systems, yielding controllers with strong empirical performance and, in many settings, provable asymptotic optimality \citep{Mag1999,Mag2000,robustfluid,novetal22}. For background, see \citet[Chs.~7 and~16]{queuebook}.


\textcolor{black}{In the RMABP literature, fluid models have been studied extensively, but from a viewpoint distinct from the fluid models used for MCQNs. Specifically, one considers mean-field (fluid) scaling of a discrete-state RMABP in which the total number of projects and the number of active projects grow to infinity in a fixed ratio. The resulting deterministic limit is expressed in terms of state-action occupancy measures ---the proportion of projects in a given state receiving a given action--- whose dynamics are governed by ODEs (for continuous-time systems) or discrete-time maps (for discrete-time systems). These fluid models are then used both to synthesize policies for the original stochastic system ---by lifting optimal fluid controls—and to establish asymptotic optimality of given policies in the large-system limit.
This approach originates with \cite{wewe90}, and has been further developed in, e.g., \cite{Verloop2016,brownSmith20, zhangFrazier21, brownZhang23, hongetal23, gastetal24, yan24, yanetal24}.}

 \textcolor{black}{The fluid models considered here are of the MCQN type: each project’s state evolves as a deterministic, real-valued system governed by ODEs. This viewpoint has received limited attention in the RMABP literature. A notable exception is \cite{larn16}, which analyzes a fluid version of Whittle’s relaxation for a specific RMABP and derives a corresponding fluid index policy that performs near-optimally in the reported experiments. However, \cite{larn16} focuses on a particular model and relies on relaxing the original coupling (resource) constraint. In the general case ---where the fluid problems become continuous-time, continuous-state optimal control problems--- obtaining exact solutions without such assumptions or relaxations remains computationally challenging. This difficulty is especially acute in practice, where one may repeatedly have to solve optimal control problems from varying initial conditions.}

Recently, many learning-based approaches have been proposed to overcome the complexity of solving optimal control problems. \citep{LEE2021109421, pmlr-v120-kim20b, lutter23} develop reinforcement learning methods, while \citep{bertsimas2023optimalcontrolmulticlassfluid} propose using a decision tree algorithm, Optimal Classification Trees with Hyperplane Splits (OCT-H) \citep{OCT}, to solve fluid \jnm{MCQN} control problems. {\color{black} See also \citep{donge24a, donge24b}, where data-efficient machine learning approaches are proposed for controlling complex systems.}

In this work, {\color{black}to alleviate the computational burden of solving FRMABPs, we propose a machine learning approach based on OCT-H.} Classification tree algorithms are particularly appealing for learning optimal policies in continuous-time optimal control problems. These problems often have piecewise constant optimal policies, which decision trees with hyperplane splits can effectively learn \citep[Chapters 4 \& 8 in][]{Meyn2007, bertsimas2023optimalcontrolmulticlassfluid}. {\color{black} Once trained, the learned policy can be applied to any new state at minimal computational cost, eliminating the need to run computationally expensive numerical algorithms.}

Our approach is similar to the class of methods known as \textit{imitation learning} in the reinforcement learning literature. 
\jnm{Given an optimal control model with fixed parameter values, we generate multiple instances which differ in their initial states only} and solve them using numerical algorithms to generate training data. Then, we use \jnm{a} supervised learning algorithm, OCT-H, to imitate the mapping from state and time to control observed in the generated trajectories.

\subsubsection*{\textbf{Notational Conventions}} Throughout this paper, we use boldface letters to denote vectors and matrices. The $i_{th} $ entry of a vector $\bm{x}$ is denoted $x_i$. We use $\bm{0}$ to denote the vector of zeros. We use $x(\cdot)$ to denote a real-valued function, and $\bm{x}(\cdot)$ to denote a vector whose entries are real-valued functions. We use $\bm{x}$ instead of $\bm{x}(\cdot)$ when it is clear from the context that $\bm{x}$ is referring to a vector of functions. 
\jnm{We also write $[n]$ to denote the set $\{1, \ldots, n\}$ for a positive integer $n$.}

\subsection{Problem Formulation}
\label{s:formulation}

We consider a \jnm{FRMABP} model with $n$ projects \jnm{and a} finite time horizon \jnm{$T > 0$}. Project $i \in [n]$ has state $x_i(t)$ at time \jnm{$t \in [0, T],$} moving over the open state space $\mathcal{X}_i \triangleq (0, H_i)$ with \jnm{$0 < H_i \leqslant \infty$}. We write the system state as $\bm{x}(t) = (x_i(t))_{i=1}^n,$ which belongs to the \jnm{open} state space $\mathcal{X} \triangleq \prod_{i=1}^n \mathcal{X}_i$.
At each time $t$ the system controller chooses a control $\bm{u}(t) = (u_i(t))_{i=1}^n \in [0, 1]^n$, \jnm{which models the level of effort allocated to project $i$ and is required to be piecewise continuous as a function of $t$}. The values $1$ and $0$ \jnm{for $u_i(t)$} represent  ``full effort''  and ``least effort'' level \jnm{allocations for project $i$}, respectively.
At most $m < n$ projects can be set at each time $t$ to the  ``full effort'' level, so  we have  the  coupling resource constraint $\sum_{i=1}^n u_i(t) \leqslant m$ \jnm{for each time $t \in [0, T]$}.

The state dynamics of  project $i$ \jnm{in the FRMABP model} follow \jnm{a} first-order autonomous ordinary differential equation (ODE) referred to as \jnm{the project's} \emph{state equation}: at all
times $t$ where $\bm{u}(\cdot)$ is continuous, for given continuously differentiable functions $\phi_{i}^1(\cdot)$ and $\phi_{i}^0(\cdot),$ 
\begin{equation}
\label{eq:dotxit}
\dot{x}_i(t) = u_i(t) \phi_{i}^1(x_i(t)) + (1 - u_i(t)) \phi_{i}^0(x_i(t)).
\end{equation}
\jnm{Note that} $\phi_{i}^1(\cdot)$ and $\phi_{i}^0(\cdot)$  represent \jnm{project $i$'s state dynamics} under controls $u_i(t) = 1$ and $u_i(t) = 0$, respectively. \jnm{Furthermore, project $i$ accrues rewards over time at the state-and-control dependent rate} $u_i(t)R_{i}^1(x_i(t)) + (1 - u_i(t))R_{i}^0(x_i(t))$, for given continuously differentiable functions $R_{i}^1(\cdot)$ and $R_{i}^0(\cdot)$. 

For a given initial state $\bm{x}_0$, 
\jnm{the FRMABP} can be formulated as the following Problem \eqref{eq:genrmabp}:
\begin{equation}
\begin{aligned}
\label{eq:genrmabp}
&\max_{\bm{x}(\cdot), \bm{u}(\cdot)} &&\int_0^T \sum_{i=1}^n \big[u_i(t) R_{i}^1(x_i(t)) + (1 - u_i(t)) R_{i}^0(x_i(t)) \big] \, dt \\
&\text{subject to:} \quad && \dot{x}_i(t) = u_i(t) \phi_{i}^1(x_i(t)) + (1 - u_i(t)) \phi_{i}^0(x_i(t)) , \quad  \forall i \in [n], t \in [0,T], \\
& \qquad \ &&0 < x_i(t) < H_i, \quad \forall i \in [n],  t \in [0,T],  \\
& \qquad \ &&\bm{x}(0) = \bm{x}_0, \\
& \qquad \ &&0 \leqslant u_i(t) \leqslant 1, \quad \forall i \in [n],  t \in [0,T],  \\
& \qquad \ &&\sum_{i=1}^n u_i(t) \leqslant m, \quad  t \in [0,T].
\end{aligned}
\end{equation}
{\noindent As explained, the goal of Problem (\ref{eq:genrmabp}) is to determine  optimal state and control trajectories, \(\bm{x}^*(\cdot)\) and \(\bm{u}^*(\cdot)\), which maximize the given reward objective while satisfying the prescribed state equations, control constraints, and initial conditions.}

 {\color{black}However, finding trajectories that are provably optimal is known to be notoriously challenging. The widely applied \emph{Pontryagin maximum principle} (PMP), on which we rely, only provides necessary optimality conditions for optimal control problems, unless additional assumptions are incorporated.
While several such assumptions have been identified ensuring optimality of the resulting trajectories (see, e.g., \cite[Ch.\ 2.4]{Sethi2021}), they are  restrictive and not satisfied by many models. In particular, they do not hold for Problem (\ref{eq:genrmabp}), even in the special cases considered herein. 

In such cases, given the elusiveness of proving optimality, it is valuable to compute trajectories satisfying the PMP. Although finding such trajectories is still highly nontrivial, it is a realistic alternative to seeking globally optimal solutions. While they are not guaranteed to be optimal, they are viewed as candidate optimal trajectories. These trajectories, commonly referred to as \emph{extremal}, provide a valuable and widely accepted solution concept in the absence of verifiable global optimality.  This is the approach we pursue here, and henceforth we refer to our computed state and control trajectories and as extremal. For further discussion on this topic, see \citep{survey98, LaValle_2006, pmp22}.
}

In this work, we focus on two fundamental cases that capture many important real-world problems. In the first case, we assume that both the state equation and the reward function \jnm{for each project $i \in [n]$} are affine in the state variable, \jnm{so} $\phi_i^{u}(x_i) = \alpha_i^{u} + \beta_i^{u} x_i$ \jnm{and} $R_i^{u}(x_i) = r_i^{u} x_i - c_i^{u}$ for  $u \in \{0,1\}$.
In the second case, we assume that the state equations are quadratic and the reward functions are affine in the state variable: $\phi_i^{u}(x_i) =\alpha_i^{u}x_i + \beta_i^{u}x_i^2$, with $\alpha_i^{u} \neq 0$ and $\beta_i^{u} \neq 0$, and $R_i^{u}(x_i) = r_i^{u}x_i - c_i^{u}$ for  $u \in \{0,1\}$. 
Note that, for the quadratic dynamics case, we assume that the state equations do not include intercepts in \(\phi_i^{u}\).

\textcolor{black}{Our analysis hinges on two ingredients: (i) closed-form expressions for the state and costate trajectories, and (ii) a representation of the costates as affine functions of suitable state and time dependent primitives. In the two illustrative cases we study, both ingredients are readily available, which lets us present the main ideas and results concisely. Yet our approach extends to broader classes of state equations whenever these two requirements are met.}

For convenience, we introduce the following notation for the remainder of the paper, defined for all \(i \in [n]\):
\begin{equation*}
\begin{aligned}
\alpha_i(u) & \triangleq \alpha_i^{0} + u(\alpha_i^{1} - \alpha_i^{0}), \\
\beta_i(u) & \triangleq \beta_i^{0} + u(\beta_i^{1} - \beta_i^{0}), \\
r_i(u) & \triangleq r_{i}^0 + u(r_{i}^1 - r_{i}^0).
\end{aligned}
\end{equation*}

In general, enforcing state constraints such as $0 < x_i(t) < H_i$ makes the problem considerably more challenging to solve, as the corresponding PMP conditions become more complex (see \cite[Chapter 4]{Sethi2021}). Fortunately, many important problems automatically satisfy such constraints without explicit enforcement. \textcolor{black}{This is because the state equations often guarantee that the state trajectories remain within a prescribed region, drawing on forward invariance / viability results. See, e.g., \cite{aubin2009}.}

Therefore, we focus on the class of problems that can be solved as if such state constraints were not present.

An optimal or extremal solution to Problem (\ref{eq:genrmabp}) can be formally expressed as $$\Big\{\big(\bm{x}^*(t),\bm{u}^*(t)\big)\colon t \in [0,T], \bm{x}^*(0) = \bm{x}_0 \Big\},$$ 
which is a pair of state and control trajectories starting from a \jnm{given} initial state $\bm{x}_0$. {\color{black}In practice, however, one frequently needs to solve Problem (\ref{eq:genrmabp}) for different initial states and remaining time horizons. The solution above only yields the optimal control for the states along the trajectory starting from the given \(\bm{x}_0\); therefore, to obtain the optimal control for any other initial state, one would need to re-solve Problem (\ref{eq:genrmabp}) with the time horizon adjusted accordingly. Since solving Problem (\ref{eq:genrmabp}) is computationally challenging, repeatedly re-solving it is both prohibitive and inefficient.}

Therefore, the mapping from any \jnm{given state-time pair $(\bm{x}, t)$} to a corresponding extremal control $\bm{u}$ is more useful than a \jnm{single state-control trajectory pair starting} from a specific initial state. This mapping is referred to as a \jnm{time-dependent state feedback control} policy.
\jnm{Note that the time dependence of the mapping arises because of the finite horizon} in Problem \eqref{eq:genrmabp}, \jnm{as stationary policies typically do not exist in that case}. Computing such a state feedback control policy, however, is generally considered very challenging. The goal of this work is to propose using OCT-H to learn an extremal time-dependent state feedback policy
$\pi: \mathcal{X} \times [0,T] \mapsto [0,1]^{n}.$
{\color{black} After training, determining the extremal control for any new state and remaining time horizon requires only a fast inference step from the learned policy. } 

\subsection{Contributions}
\label{s:contri}

The contributions of the paper are as follows.
\begin{enumerate}
    \item We initiate the study of \jnm{FRMABPs} where the state equations are either affine or quadratic in the state. We derive fundamental properties of these systems and use them to efficiently implement a numerical solution algorithm known as the shooting method  \citep{StoerJ2013ItNA}.
    \item We propose \jnm{using} the decision tree algorithm OCT-H to learn a \jnm{time-dependent} state feedback policy. To address potential nonlinearities in the training data, we leverage the structural properties of FRMABPs, developing an efficient technique for nonlinear feature augmentation.
    \item We test our approach on machine maintenance, epidemic control, and fisheries control problems, demonstrating that it produces high-quality feedback policies for these applications.
    \item We show that once a policy is learned, it leads to a significant speed-up compared to solving a problem from scratch using the shooting method.
\end{enumerate}

\subsection{Paper Structure}
\label{s:struc}

Section \ref{sec:background} provides the \jnm{PMP} conditions for general FRMABPs and includes a brief review of OCT-H. Section \ref{sec:gendyn} analyzes models with affine and quadratic dynamics. The derived results enable efficient implementation of the shooting method. In Section \ref{sec:ML}, we develop a learning approach based on OCT-H. Section \ref{sec:exp} reports the results of computational experiments, analyzing the accuracy and speed of our approach. In Section \ref{sec:conclusion}, we include our conclusions.

\section{Background}
\label{sec:background}

In this section, we review PMP \citep[Theorem 3.4]{grassetal08} to derive necessary optimality conditions for Problem \eqref{eq:genrmabp}. Following this, we provide a brief overview of OCT-H.

\subsection{PMP Conditions for FRMABPs}
\label{s:genfpaH}

The PMP gives necessary optimality conditions for general optimal control problems \cite[Theorem 3.4]{grassetal08}, which generally are not sufficient. To apply PMP to Problem \eqref{eq:genrmabp}, we  \jnm{consider its} \emph{Hamiltonian}, which \jnm{is a function of the state $\bm{x}$, control $\bm{u}$, and an additional \emph{costate variable} $\bm{y}$,} given by  
\[
\mathcal{H}(\bm{x}, \bm{u}, \bm{y}) = \sum_{i=1}^n \bigg[ u_i R_{i}^1(x_i) + (1 - u_i) R_{i}^0(x_i) + y_i\Big(u_i \phi_{i}^1(x_i) + (1 - u_i) \phi_{i}^0(x_i) \Big) \bigg].
\]
The PMP applied to Problem \eqref{eq:genrmabp} can be formulated as the following lemma.
\jnm{Note that $\mathcal{H}_{x_i}$ denotes the partial derivative of $\mathcal{H}$ with respect to $x_i$.}

\begin{lemma}[Pontryagin maximum principle \citep{Bittner1963LSP, grassetal08}]
\label{lemma:pmp}
If $\bm{x}^*(\cdot)$ and $\bm{u}^*(\cdot)$ are optimal state and control trajectories for Problem $(\ref{eq:genrmabp}),$ then there exists a 
continuous and piecewise continuously differentiable costate \jnm{trajectory}
$\bm{y}(\cdot),$ such that:
\begin{enumerate}[label={\rm (\alph*)}]
\item For all $i \in[n]$, at every time $t$ where $\bm{u}(\cdot)$ is continuous$,$ 
\begin{equation}
\begin{split}
\label{eq:dotyitgen}
\dot{y}_i(t) & = -\mathcal{H}_{x_i}(\bm{x}^*(t), \bm{u}^*(t), \bm{y}(t)) \\
& = -\dot{R}_i^{0}(x_i^*(t)) - y_i(t) \dot{\phi}_i^{0}(x_i^*(t)) \\
& \quad - \big[\dot{R}_i^{1}(x_i^*(t)) - \dot{R}_i^{0}(x_i^*(t))\ + y_i(t) \big(\dot{\phi}_i^{1}(x_i^*(t)) - \dot{\phi}_i^{0}(x_i^*(t))\big)\big] u_i^*(t). 
\end{split}
\end{equation}
\item The following \emph{transversality condition} holds: 
\begin{equation}
\label{eq:yTgen}
\bm{y}(T) = 0.
\end{equation}
\item At each time $t \in [0, T]$, 
\begin{equation}
\label{eq:hxuystarthamax}
\mathcal{H}(\bm{x}^*(t), \bm{u}^*(t), \bm{y}(t)) \geqslant
  \mathcal{H}(\bm{x}^*(t), \bm{u}, \bm{y}(t)) \textup{ for all feasible controls } \bm{u}.  \end{equation}
\end{enumerate}
\end{lemma}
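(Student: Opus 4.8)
The plan is to prove the two directions of the equivalence separately: the forward direction (necessity of (a)--(c) for any optimal pair) is a direct application of the general Pontryagin maximum principle, while the backward direction (sufficiency) follows from a concavity-based sufficiency theorem in which Assumption~\ref{ass:concave} is the essential ingredient. For necessity I would check that Problem~\eqref{eq:genrmabp} satisfies the hypotheses of \citet[Theorem~3.4]{grassetal08}: the horizon $T$ is finite and fixed; the data $\phi_{u,i}(\cdot),R_{u,i}(\cdot)$ are continuously differentiable; the pointwise control set $U\triangleq\{\bm{u}\in[0,1]^n:\sum_{i=1}^n u_i\leqslant m\}$ is nonempty, compact and convex; admissible controls are piecewise continuous; the terminal state is free; and, as discussed before the statement, in the problem classes under consideration the state-path constraints $0<x_i(t)<H_i$ stay inactive, so no pure state-constraint multipliers enter and the problem can be analyzed as if these constraints were absent. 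The cited theorem then produces a continuous, piecewise continuously differentiable costate $\bm{y}(\cdot)$ with $\dot{\bm{y}}(t)=-\nabla_{\bm{x}}\mathcal{H}(\bm{x}^*(t),\bm{u}^*(t),\bm{y}(t),t)$ at continuity points of $\bm{u}^*(\cdot)$, the transversality condition $\bm{y}(T)=\bm{0}$ from the free endpoint, and the pointwise maximization~\eqref{eq:hxuystarthamax}; equation~\eqref{eq:dotyitgen} then follows by differentiating the $i$-th summand of $\mathcal{H}$ in $x_i$ and grouping the terms that multiply $u_i^*(t)$, and the asserted regularity of $\bm{y}(\cdot)$ is part of the theorem's conclusion.

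For sufficiency I would invoke a Mangasarian/Arrow-type theorem for the maximum principle \citep[Chapter~2]{Sethi2019}: if $\bm{x}^*,\bm{u}^*,\bm{y}$ satisfy (a)--(c) and, for every $t$, the maximized Hamiltonian $\mathcal{H}^0(\bm{x},\bm{y}(t),t)\triangleq\max_{\bm{u}\in U}\mathcal{H}(\bm{x},\bm{u},\bm{y}(t),t)$ is concave in $\bm{x}$ on $\mathcal{X}$, then $(\bm{x}^*,\bm{u}^*)$ is globally optimal. The argument is the standard one: for an arbitrary admissible competitor $(\bm{x},\bm{u})$, bound the objective gap against $\int_0^T[\mathcal{H}(\bm{x}^*,\bm{u}^*,\bm{y},t)-\mathcal{H}(\bm{x},\bm{u},\bm{y},t)-\bm{y}(t)^\top(\dot{\bm{x}}^*-\dot{\bm{x}})]\,dt$, apply~\eqref{eq:hxuystarthamax} together with the supergradient inequality for the concave $\mathcal{H}^0$ at $\bm{x}^*(t)$, and integrate by parts using~\eqref{eq:dotyitgen} and $\bm{y}(T)=\bm{0}$ to conclude the gap is nonpositive. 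The place where Assumption~\ref{ass:concave} is used is precisely this concavity requirement: since each $R_{u,i}$ is affine the reward part of $\mathcal{H}$ is affine in $\bm{x}$, and for fixed $\bm{u}$ the drift $u_i\phi_{1,i}(x_i)+(1-u_i)\phi_{0,i}(x_i)$ is a convex combination of the concave functions $\phi_{0,i},\phi_{1,i}$, hence concave in $x_i$.

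I expect the concavity of $\mathcal{H}^0$ to be the main obstacle. The Hamiltonian is bilinear in $(x_i,u_i)$ (through terms such as $u_iR_{1,i}(x_i)$), so it is not jointly concave in $(\bm{x},\bm{u})$ and the plain Mangasarian criterion does not apply; one must pass to the $\bm{u}$-maximized Hamiltonian and exploit the fact that the drift enters $\mathcal{H}$ as $y_i(t)$ times a function that is concave in $x_i$. I would therefore first try to establish sign-definiteness of the costate, for instance $y_i(t)\geqslant 0$, from~\eqref{eq:dotyitgen} and $\bm{y}(T)=\bm{0}$ via a comparison (Gr\"onwall) argument applied to the linear adjoint equation, using monotonicity of the reward rates in the state that holds in the applications of interest, and then verify that the remaining maximization over $\bm{u}\in U$ preserves concavity in $\bm{x}$. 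Apart from this step, the proof is bookkeeping around the two cited results and the regularity/sign properties of $\bm{y}(\cdot)$ that are themselves part of the maximum principle.
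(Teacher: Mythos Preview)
The paper does not actually prove this lemma: it is stated as a citation of classical results, with the surrounding text pointing to \citet[Theorem~3.4]{grassetal08} for necessity and to \citet[Chapter~2]{Sethi2019} for sufficiency, together with the one-line assertion that Assumption~\ref{ass:concave} makes the necessary conditions sufficient. Your plan is therefore exactly in line with the paper's ``proof'' at the level of what is being invoked.

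Where you go beyond the paper is in the sufficiency direction, and the subtlety you flag is real. You are right that the Hamiltonian is bilinear in $(x_i,u_i)$, so Mangasarian's joint-concavity criterion fails and one must pass to the Arrow condition on the maximized Hamiltonian $\mathcal{H}^0(\bm{x},\bm{y}(t),t)$. In the affine-drift case $\phi_{u,i}$ is linear, so $y_i(t)\phi_{u,i}(x_i)$ is affine in $x_i$ regardless of the sign of $y_i(t)$ and concavity of $\mathcal{H}^0$ in $\bm{x}$ is automatic. In the quadratic-drift case, however, $\phi_{u,i}$ is strictly concave ($\beta_{u,i}<0$), and $y_i(t)\phi_{u,i}(x_i)$ is concave in $x_i$ only when $y_i(t)\geqslant 0$; the paper never checks this. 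Your proposed route---establish $y_i(t)\geqslant 0$ from the adjoint ODE and $y_i(T)=0$ via a comparison argument---is the natural one, but note that it will not follow from Assumption~\ref{ass:concave} alone: near $t=T$ one has $\dot{y}_i(T)=-r_i(u_i^*(T))$, so $y_i(t)\geqslant 0$ on a left neighborhood of $T$ requires $r_i(u)\geqslant 0$, i.e., a monotonicity hypothesis on the rewards that is satisfied in the paper's applications but is not part of the stated assumptions. In short, your proposal is correct and more careful than the paper, and the obstacle you anticipate is precisely where an additional (implicit) hypothesis is needed.
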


\jnm{As stated in Section \ref{s:formulation}, we shall refer to trajectories satisfying the PMP as \emph{extremal}.}
From {Lemma \ref{lemma:pmp}(c), it follows} that, at any time $t$, state $\bm{x}^*(t)$ and costate $\bm{y}(t)$, an extremal control $\bm{u}^*(t)$ satisfying (\ref{eq:hxuystarthamax}) can be computed by solving the following \emph{linear optimization} (LO) problem:
\begin{equation}
\label{eq:pmpdmLP}
\begin{aligned}
& \max_{\bm{u}} &&\sum_{i=1}^n  \gamma_i(t) \, u_i \\
& \textup{s.t.} \quad && 0 \leqslant u_i \leqslant 1, \quad \forall i \in [n], \\
& \quad && \sum_{i=1}^n u_i \leqslant m,
\end{aligned}
\end{equation}
where $\gamma_i(t) = R_{i}^1(x_i^*(t)) - R_{i}^0(x_i^*(t)) + y_i (t)\big[\phi_{i}^1(x_i^*(t)) - \phi_{i}^0(x_i^*(t))\big]$. 

The next result clarifies the structure of the extremal control trajectories \(\bm{u}^*(\cdot)\) for Problem \eqref{eq:genrmabp}, leveraging Lemma \ref{lemma:pmp}(c). In particular, it shows that at each time \(t\), the corresponding controls allocate full effort (\(u_i^*(t) = 1\)) to a selected subset of projects while assigning no effort (\(u_i^*(t) = 0\)) to the others. This selection is determined by ranking the projects using \(\gamma_i(t)\) as a priority index.

\begin{proposition}
\label{pro:optindxpol}
There exists a piecewise constant extremal control 
$\bm{u}^*(\cdot),$ with $u_i^*(t) \in \{0, 1\}$ for each $i$ and $t$.
{Moreover, at each time \(t\), full effort is assigned to at most \(m\) projects, which are selected as those with the highest nonnegative indices \(\gamma_i(t)\).}
\end{proposition}
\begin{proof} 
As \jnm{discussed above}, for each time $t$, \jnm{an} extremal control $\bm{u}^*(t)$ \jnm{can be} determined by solving LO problem \eqref{eq:pmpdmLP}. 
The $\bm{u}^*(t)$ \jnm{can be chosen to have binary components by using the stated index rule}, which follows straightforwardly from analysis of the dual of problem \eqref{eq:pmpdmLP}:
\begin{equation*}
\label{eq:dpmpdmLP}
\begin{aligned}
& \min_{\bm{v},w} \,&& \sum_{i=1}^n  v_{i} + m w\\
&\mathrm{s.t.} \quad && v_{i} \geqslant 0, \quad w \geqslant 0, \\
&\quad && v_{i} + w \geqslant \gamma_i(t), \quad i \in [n].
\end{aligned}
\end{equation*}

Furthermore, the indices $\gamma^*_i(t)$ are continuous in $t$ by Lemma \ref{lemma:pmp}. As long as the ranking of indices for different projects does not change, the control vector remains constant. Therefore, an extremal control $\bm{u}^*(t)$ can be constructed that is piecewise constant in $t$ and always a binary vector. 
\end{proof}

\begin{remark}
\label{re:{pro:optindxpol}}
\begin{enumerate}[label=(\alph*)]
\item \textcolor{black}{Our results target well-behaved instances in which the PMP delivers a piecewise-constant extremal with only finitely many switches. However, this finiteness is not guaranteed in general: it is well known in optimal control theory that PMP extremals may switch either finitely or infinitely many times, and the latter can even occur over a finite time interval (the \emph{chattering} or \emph{Fuller} phenomenon). Notably, even affine state dynamics do not preclude chattering. See, e.g., \cite{zelikBorisov1994}.}
\item \textcolor{black}{The form of the priority index $\gamma_i(t)$ resembles that of the LP-based index proposed for discrete-state restless bandits in \cite{bnm2000} (see also \cite{gastetal24}) and of the Whittle index \cite{whit88}. 
A key difference is that both those LP indices and the Whittle indices arise from a relaxation that replaces the per-time coupling constraints by a single time-averaged constraint. 
Furthermore, the Whittle indices exist only for \emph{indexable} models and are defined project-wise, depending solely on each project's parameters (hence decoupled from the rest of the system). 
Our indices, instead, are computed from the joint state-costate trajectories of the full system and are therefore inherently coupled and time-dependent, without invoking an averaged-constraint relaxation or indexability.}
\end{enumerate}
\end{remark}

\textcolor{black}{As noted above, Pontryagin extremals are in general not guaranteed to be optimal. To certify optimality one must invoke a sufficiency theorem. A relatively mild condition is given by \cite[Theorem~2.1]{Sethi2021}, which relies on the \emph{derived Hamiltonian}
\[
\mathcal{H}^0(\mathbf{x},\mathbf{y}) \;\triangleq\; \max_{\mathbf{u}\in\mathcal{U}} \, \mathcal{H}(\mathbf{x},\mathbf{u},\mathbf{y}),
\]
where $\mathcal{U}$ is the feasible control set. In our setting, an extremal $\mathbf{u}^*$ (with associated $(\mathbf{x}^*,\mathbf{y})$ satisfying the PMP) is optimal if, for each $t$, the map $\mathbf{x}\mapsto \mathcal{H}^0(\mathbf{x},\mathbf{y}(t))$ is concave. In applications, one often verifies the stronger property that $\mathcal{H}^0(\cdot,\mathbf{y})$ is concave for every $\mathbf{y}$.}

\textcolor{black}{We next show that the models of interest here, despite their simplicity, typically \emph{fail} this condition. Observe that $\mathcal{H}^0(\mathbf{x},\mathbf{y})$ is the optimal value of the linear program}
\[
\max_{\mathbf{u}}\;\sum_{i=1}^n \gamma_i(x_i,y_i)\,u_i
\quad\text{s.t.}\quad
0\le u_i\le 1\ \ (i\in[n]),\qquad
\sum_{i=1}^n u_i \le m,
\]
\textcolor{black}{where}
\[
\gamma_i(x_i,y_i)\;=\; R_i^1(x_i)-R_i^0(x_i) \;+\; y_i\big(\phi_i^1(x_i)-\phi_i^0(x_i)\big).
\]

\textcolor{black}{For clarity, take $m=1$. Then}
\[
\mathcal{H}^0(\mathbf{x},\mathbf{y})
\;=\;
\max\!\Big\{0,\ \max_{i\in[n]} \gamma_i(x_i,y_i)\Big\}.
\]
\textcolor{black}{Even when $R_i^u$ and $\phi_i^u$ are affine in $x_i$, each $\gamma_i(\cdot,y_i)$ is affine, so $\mathcal{H}^0(\cdot,\mathbf{y})$ is a pointwise maximum of affine functions—hence convex, not concave—except in trivial cases where the same control maximizes for all $\mathbf{x}$. Therefore the sufficiency condition fails in the affine case. In the quadratic-dynamics case, $\mathcal{H}^0(\cdot,\mathbf{y})$ becomes a pointwise maximum of (typically) concave quadratics, and a pointwise maximum of concave functions is not concave in general. Consequently, for these models the derived-Hamiltonian condition does not hold, and PMP extremals cannot be asserted to be optimal by this route.}

\textcolor{black}{This limitation affects global and local optimality alike 
---even a unique PMP extremal does not suffice. A proof of local optimality would necessitate the use of advanced second-order conditions from optimal control, which we do not pursue here.}

\subsection{Optimal Classification Trees with Hyperplane Splits}
\label{s:oct}
Optimal Classification Trees (OCT) trains a near-optimal decision tree for classification tasks. Unlike classical decision tree algorithms such as CART \citep{BreiFrieStonOlsh84}, which rely on a greedy algorithm, OCT aims to learn a globally optimal decision tree using advanced optimization techniques. OCT uses a single feature in the node splits, meaning that it partitions the feature space with hyperplanes that are perpendicular to the axes and assigns a prediction to each region. This often results in improved accuracy, robustness to noise, and shallower trees compared to CART.

OCT-H, introduced in \citep{OCT}, is a generalization of OCT, where arbitrary linear combinations of the features are used for splits. Unlike OCT, OCT-H can partition the feature space with arbitrary hyperplanes, enabling it to capture more complex patterns in the data and often leading to better prediction accuracy. For more details on this technique, refer to \citep{MLOPT}.

\section{FRMABPs} 
\label{sec:gendyn}
In this section, we first outline the basic properties of \jnm{extremal control policies for} general FRMABPs. Then, we analyze two distinct cases: one where the state equations are affine in the state and another where they are quadratic. Finally, we describe the shooting method \citep{StoerJ2013ItNA}, a classical numerical algorithm used \jnm{to compute extremal trajectories in optimal control problems}. The version we present is tailored to solve Problem \eqref{eq:genrmabp} using the results we derive.

\subsection{Basic Properties}
\label{s:fhuocp}
Proposition \ref{pro:optindxpol} ensures that for the extremal trajectories \(\bm{u}^*(\cdot)\) and \(\bm{x}^*(\cdot)\), the time interval \([0, T)\) can be partitioned into \(S\) subintervals \([t_s, t_{s+1})\) for \(s = 0, \ldots, S-1\), with \(t_0 = 0\) and \(t_S = T\). Within each subinterval, the extremal control \(\bm{u}(\cdot)\) remains a constant binary vector. In the subsequent sections, we derive the closed-form trajectories of the state and the costate variables in a subinterval with a constant control vector. These closed-form expressions will be used to develop an efficient version of the shooting method, as well as the machine learning approach.

\subsubsection{Affine Dynamics} 
\label{sec:affine}
We next apply the above framework to the case where 
both the state equations and the reward functions are affine in the state, given by $\phi_i^{u}(x_i) = \alpha_i^{u} + \beta_i^{u} x_i$ and
$R_i^{u}(x_i) = r_i^{u} x_i - c_i^{u}$ for $u \in \{0, 1\}, i \in [n]$.

Suppose we are given a finite partition of the time interval $[0, T)$ that consists of $S$ subintervals $[t_s, t_{s+1})$ as above, along with corresponding binary controls $\bm{u}_s \in \{0, 1\}^n$, for $s = 0, 1, \dots, T-1$, with $t_0 = 0$ and $t_S = T$. Given initial state $\bm{x}(0) = \bm{x}_0$ and costate $\bm{y}(0) = \bm{y}_0$ vectors, we consider the trajectories $\{ (\bm{x}(t),\bm{y}(t))\colon t \in [0,T] \}$ obtained by taking control $\bm{u}_s$ on time interval $[t_s, t_{s+1})$ for each $s$. Recall that both the state and  costate trajectories are continuous in $t$, which allows us to build the entire trajectory with the given information. Hence, without loss of generality, we focus on an interval $[t_s, t_{s+1})$ with constant control. 

The ODEs giving the state and costate evolution for project $i$ are: for  $t \in [t_s, t_{s+1}),$ 
\begin{equation}
\label{eq:affinedotxy}
\begin{split}
\dot{x}_i(t)  & = \alpha_i(u_{s,i})
+ \beta_i(u_{s,i})x_i(t), \\
\dot{y}_i(t) & = -r_i(u_{s,i}) - \beta_i(u_{s,i})y_i(t),
\end{split}
\end{equation}
{which are obtained by simplifying the state and costate equations (see (\ref{eq:dotxit}) and (\ref{eq:dotyitgen}) in Lemma \ref{lemma:pmp}(a)) in the affine dynamics case.}

The solution to (\ref{eq:affinedotxy}) on \(t \in [t_s, t_{s+1})\), given the initial conditions \(x_i(t_s)\) and \(y_i(t_s)\), is obtained in closed form via elementary ODE theory, and is given by 
\begin{equation}
\begin{aligned}
x_i(t) &= 
\begin{cases}
\displaystyle x_i(t_s) + \left[ - \frac{\alpha_i(u_{s,i})}{\beta_i(u_{s,i})} - x_i(t_s) \right] \left[1 - e^{\beta_i(u_{s,i})(t-t_s)}\right], & \text{if } \beta_i(u_{s,i}) \neq 0, \\ 
\displaystyle x_i(t_s) + \alpha_i(u_{s,i})(t - t_s), & \text{if } \beta_i(u_{s,i}) = 0, 
\end{cases} \\
y_i(t) &= 
\begin{cases}
\displaystyle y_i(t_s) + \left[ - \frac{r_i(u_{s,i})}{\beta_i(u_{s,i})} - y_i(t_s) \right] \left[1 - e^{-\beta_i(u_{s,i})(t-t_s)}\right], & \text{if } \beta_i(u_{s,i}) \neq 0, \\ 
\displaystyle y_i(t_s) - r_i(u_{s,i})(t - t_s), & \text{if } \beta_i(u_{s,i}) = 0. 
\end{cases}
\end{aligned}
\label{eq:affineex}
\end{equation}

\subsubsection{Quadratic Dynamics}
\label{sec:quadratic}

We proceed similarly \jnm{in} the case where the state equations and the reward functions are quadratic \textcolor{black}{with no intercept} and affine in the state, respectively. That is, the reward function for each project $i \in [n]$ is given by  $R_{i}^u(x_i) = r_i^{u} x_i - c_i^{u}$ and the state equation is  $\phi_i^{u}(x_i) = \alpha_i^{u}x_i + \beta_i^{u} x_i^2$, with $\alpha_i^{u} \neq 0$ and $\beta_i^{u} \neq 0$  for $u \in \{0,1\}$. 
\textcolor{black}{Note that we do not include an intercept term in $\phi_i^{u}(x_i)$ so as to obtain a relatively simple closed-form solution for the state trajectories (see (\ref{eq:quadex})), which is not the case if an intercept is included. The no-intercept setting also justifies treating the quadratic and the affine dynamics cases separately, along with the fact that the resulting formulae for the respective state and costate trajectories are substantially different.}
Although not addressed in this work, generalizing the reward functions to quadratic forms is straightforward. Again, we focus on the fixed interval $[t_s, t_{s+1})$ with constant control $\bm{u}(t) = \bm{u}_s$. In this interval, the ODEs governing the state and costate evolution for project $i$ are:
\begin{equation}
\label{eq:quad}
\begin{split}
\dot{x}_i(t) & = \alpha_i(u_{s,i})x_i(t) + \beta_i(u_{s,i})x_i^2(t), \\
\dot{y}_i(t) & = -r_i(u_{s,i}) - y_i(t)\big[\alpha_i(u_{s,i}) + 2\beta_i(u_{s,i})x_i(t) \big],
\end{split}
\end{equation}
{which results from simplifying the state and costate equations (see (\ref{eq:dotxit}) and (\ref{eq:dotyitgen}) in Lemma \ref{lemma:pmp}(a)) in the quadratic dynamics case.}

The solution to (\ref{eq:quad}) is given as follows. For brevity, we omit the trivial case where $\dot{x}_i(t) = 0$. Unlike the case of affine dynamics, we do not use expressions involving the initial conditions in the subinterval $x_i(t_s)$ and $y_i(t_s)$, as this leads to considerably more complicated expressions. Instead, we introduce constants $K \textcolor{black}{\neq 0}$ and $G$ for simplicity\textcolor{black}{:}
\begin{equation}
\label{eq:quadex}
\begin{split}
x_i(t) & = \frac{K\alpha_i(u_{s,i})e^{\alpha_i(u_{s,i})t}}{1 - K\beta_i(u_{s,i})e^{\alpha_i(u_{s,i})t}} \\
 y_i(t) & = G e^{-\alpha_i(u_{s,i})t}(1 - K \beta_i(u_{s,i})e^{\alpha_i(u_{s,i})t})^2 \\
 &\quad - \frac{r_i(u_{s,i})}{K\alpha_i(u_{s,i})\beta_i(u_{s,i})}(1 - K \beta_i(u_{s,i})e^{\alpha_i(u_{s,i})t})e^{-\alpha_i(u_{s,i})t}.
\end{split}
\end{equation}



\subsection{The Shooting Method}
\label{s:affineacot}
We use the results derived above to design an algorithm for computing extremal trajectories for Problem \eqref{eq:genrmabp} with a fixed initial state $\bm{x}_0$. The algorithm is based on the well-known \emph{shooting method}, and further exploits the structure of the extremal trajectories identified \jnm{above}. 

{\color{black}The algorithm aims to find an initial costate value \(\bm{y}_0\) such that the trajectory initiated from the initial conditions \((\bm{x}_0, \bm{y}_0)\) satisfies the terminal condition \(\bm{y}(T)=0\). Once \(\bm{x}_0\) and \(\bm{y}_0\) are fixed, the corresponding trajectories \((\bm{x}(\cdot), \bm{u}(\cdot), \bm{y}(\cdot))\) up to time \(T\) can be computed using the derivations in Section \ref{s:fhuocp}. In particular, the closed-form expressions for \(\bm{x}(\cdot)\) and \(\bm{y}(\cdot)\), which satisfy \eqref{eq:dotxit} and Lemma \ref{lemma:pmp}(a) respectively, allow us to propagate the state and costate. The control trajectory \(\bm{u}(\cdot)\) is determined by the index policy that satisfies \eqref{eq:hxuystarthamax}, which is derived to meet the condition in Lemma \ref{lemma:pmp}(c). Once a trajectory is found that satisfies \(\bm{y}(T)=0\) (Lemma \ref{lemma:pmp}(b)), it meets all the necessary conditions of Lemma \ref{lemma:pmp} and is therefore classified as extremal.

Specifically, at each time $t$, given $\bm{x}(t)$ and $\bm{y}(t)$, we rank the index functions $\gamma_i(t)$ for all projects to determine the associated control vector. As long as the control vector remains unchanged, the trajectories are propagated using the closed-form expressions derived in Equations \eqref{eq:affineex} and \eqref{eq:quadex}. A change in the control vector indicates the start of a new subinterval. Because \(\bm{x}(\cdot)\) and \(\bm{y}(\cdot)\) are continuous in time, the terminal values of one subinterval serve as the initial conditions for the next. According to Lemma \ref{lemma:pmp}, if the computed trajectories satisfy \(\bm{y}(T) = 0\), they are guaranteed to be extremal (i.e., they meet the necessary optimality conditions).}

Formally, let $\bm{g}:\mathbb{R}^n \mapsto \mathbb{R}^n$ be the function that takes an initial costate value $\bm{y}_0$ as an input, and the resulting terminal costate value $\bm{y}(T)$ as an output. The shooting method is an iterative algorithm to solve the $n \times n$ root finding problem $\bm{g}(\bm{y}) = \bm{0}$ numerically.

In the $k_{th}$ iteration of the shooting method, the algorithm starts with a guess $\bm{y}_{k, 0}$ for $\bm{y}(0)$, computes corresponding state and costate trajectories $\bm{x}(\cdot)$ and 
$\bm{y}(\cdot)$ up to time $T$ following the index policy structure. This automatically partitions of the time interval $[0, T)$ into subintervals $[t_s, t_{s+1})$ with a constant control $\mathbf{u}_s$, for $s = 0, \ldots, S-1$, 
with $t_0 = 0$ and $t_S = T$, where the number $S$ of intervals is also determined. Then, the algorithm checks whether $\bm{y}(T) \approx 0$ within a given tolerance level $\epsilon$. If such is the case, the algorithm stops. Otherwise, the initial costate value is updated and the process is repeated. The update follows Broyden's method \citep{broyden65, gay79}, a well-established derivative-free quasi-Newton's method.

In the $k_{th}$ iteration, $k \geqslant 1$, Broyden's method computes iterates $\bm{y}_{k,0}$ and $\bm{J}_k$, a surrogate for the Jacobian of $\bm{g}$, following 
\begin{equation}\label{eq:broyden}
\begin{aligned}
\bm{y}_{k,0} & = \bm{y}_{k-1,0}  - \big(\bm{J}_{k-1}\big)^{-1} \bm{g}(\bm{y}_{k-1,0}), \\
\bm{J}_k & = \bm{J}_{k-1} + \frac{\bm{g}(\bm{y}_{k,0}) - \bm{g}(\bm{y}_{k-1,0}) - \bm{J}_{k-1} (\bm{y}_{k,0} - \bm{y}_{k-1,0})}{\lVert{\bm{y}_{k,0} - \bm{y}_{k-1,0}}\rVert^2} \big(\bm{y}_{k,0} - \bm{y}_{k-1,0}\big)'.
\end{aligned}
\end{equation}
Typically, the method starts with $\bm{J}^0 = \bm{I}$, the identity matrix. Also, when we compute the trajectory starting from $t = 0$, we choose a small step size $\delta$ and proceed by incrementally adding $\delta$ to $t$ until reaching $T$. The details of the algorithm are provided in Algorithm \ref{alg:shooting}.

\textcolor{black}{Note that Algorithm \ref{alg:shooting} assumes that, for a given instance, the number of subintervals $S$ is finite, i.e., there is no chattering. If Algorithm \ref{alg:shooting} terminates under this assumption, the resulting trajectory is still guaranteed to satisfy the PMP conditions.}

\begin{remark}
\label{re:shooting}
In general, shooting methods are known to be vulnerable to numerical instability. This instability occurs primarily because the shooting methods involve computing the trajectories of $\bm{x}(\cdot),\bm{y}(\cdot)$, which often requires numerical approximations of  solutions to differential equations \citep{Diek76, khal92, StoerJ2013ItNA}. However, for the models with affine and quadratic dynamics discussed in this work, no approximation is required because closed-form expressions for $\bm{x}(\cdot),\bm{y}(\cdot)$ exist over any interval with a constant control, given initial values. This advantage reduces the risk of numerical errors and enhances the reliability and accuracy of the shooting method.
\end{remark}

\begin{algorithm}
 \KwInput{$\epsilon, \delta, R^0_i(\cdot), R^1_i(\cdot), \phi^0_i(\cdot), \phi^1_i(\cdot), T, m, n, \bm{x}_0$ }
\KwOutput{$\Big\{\big( \bm{x}^*(t),\bm{u}^*(t), \bm{y}(t)\big): t \in [0,T], \bm{x}(0) = \bm{x}_0 \Big\}$ } 
\textbf{Initialization: $\bm{y}_{0,0} \in \mathbb{R}^{n}, \bm{J}_{0} =\bm{I}, k = 0$} 

\vspace{2mm}

\Repeat{$\|\bm{g}(\bm{y}_{k-1,0})\|_\infty \leq {\epsilon}$}{
    Fix the initial state to $\bm{x}_0$ and the initial costate to $\bm{y}_{k,0}$. \\
    Compute the trajectories of $\bm{x}(t),\bm{u}(t), \bm{y}(t)$ starting from $t = 0$ up to $t = T$ using \eqref{eq:pmpdmLP}, and \eqref{eq:affineex} or \eqref{eq:quadex}. \\
    $k \leftarrow k + 1$. \\
    Update $\bm{y}_{k,0}$ and $\bm{J}_k$ as in \eqref{eq:broyden}.
}

\textbf{Return} $\Big\{\big( \bm{x}(t),\bm{u}(t), \bm{y}(t)\big): t \in [0,T] , \bm{x}(0) = \bm{x}_0 \Big\}$.
  \caption{Shooting Method}
 \label{alg:shooting}
\end{algorithm}

\section{A Machine Learning Approach}
\label{sec:ML}

In this section, we present a machine learning approach to learn a \jnm{time-dependent} state feedback policy for Problem \eqref{eq:genrmabp}. We begin by providing an overview of the algorithm, and then introduce a nonlinear feature augmentation technique to incorporate nonlinearity into the learned policy. Finally, we present an example of our approach, focusing on the case of admission and routing to parallel infinite-server queues.

\subsection{Algorithm Overview}

Our approach uses OCT-H to imitate the time-dependent state-control mappings in the extremal trajectories. To achieve this, we generate multiple initial states for Problem \eqref{eq:genrmabp} and solve each instance using Algorithm \ref{alg:shooting}. For each instance with an initial state $\bm{x}_0$, we obtain the extremal state and control trajectories $\Big\{ (\bm{x}^*(t), \bm{u}^*(t)): t \in [0,T], \bm{x}(0) = \bm{x}_0\Big\}$. We select $N \in \mathbb{N}$ distinct time points $t_1, \dots, t_N$ from the interval $[0,T]$, and extract the corresponding state values $\bm{x}^*(t_1), \dots, \bm{x}^*(t_N)$ and the control values $\bm{u}^*(t_1), \dots, \bm{u}^*(t_N)$. The training data extracted from this process consists of $\bigg\{\Big(\big(\bm{x}^*(t_\ell),t_\ell\big), \bm{u}^*(t_\ell)\Big)\bigg\}_{\ell=1}^{N}$, where the tuple $\big(\bm{x}^*(t_\ell), t_\ell\big)$ is the feature vector and $\bm{u}^*(t_\ell)$ is the target for the $\ell_{th}$ data point. The time $t_{\ell}$ is included as part of the feature vector to incorporate the time-dependency of the state feedback policy.

As discussed in Section \ref{s:oct}, OCT-H partitions the state space using hyperplanes and assigns predictions to each region. Consequently, OCT-H may struggle to capture complex nonlinear patterns in the data. One straightforward solution could be to use deep neural networks, leveraging their strong approximation capabilities. However, deep neural networks are black-box algorithms that obscure how decisions are made, as they automatically learn nonlinearities embedded in their layers. Moreover, simply feeding raw data into a deep neural network makes it challenging to fully exploit the inherent structure of FRMABPs.

Instead of using deep neural networks, we adhere to OCT-H and address potential nonlinearities in the decision boundaries by augmenting the feature vector with nonlinear transformations of the state variables. Specifically, for each feature vector $\big(\bm{x}^*(t_\ell),t_\ell\big)$ in the training data, we add nonlinear transformations of the state vector $\bm{x}^*(t_\ell)$ as additional entries. Then, we apply OCT-H to the augmented data set. This approach ensures that the resulting policy remains interpretable, maintaining the decision-making process within the framework of decision trees. 

{\color{black}After the training phase, also referred to as the ``offline'' phase, we can deploy the learned policy to compute the extremal control for any new state and its corresponding remaining time horizon. This deployment is referred to as the ``online'' phase. We provide a graphical overview our entire pipeline in Figure \ref{fig: diagram}.}

\begin{figure}[t]
    \centering
    \includegraphics[width=\linewidth,height=1.5in]{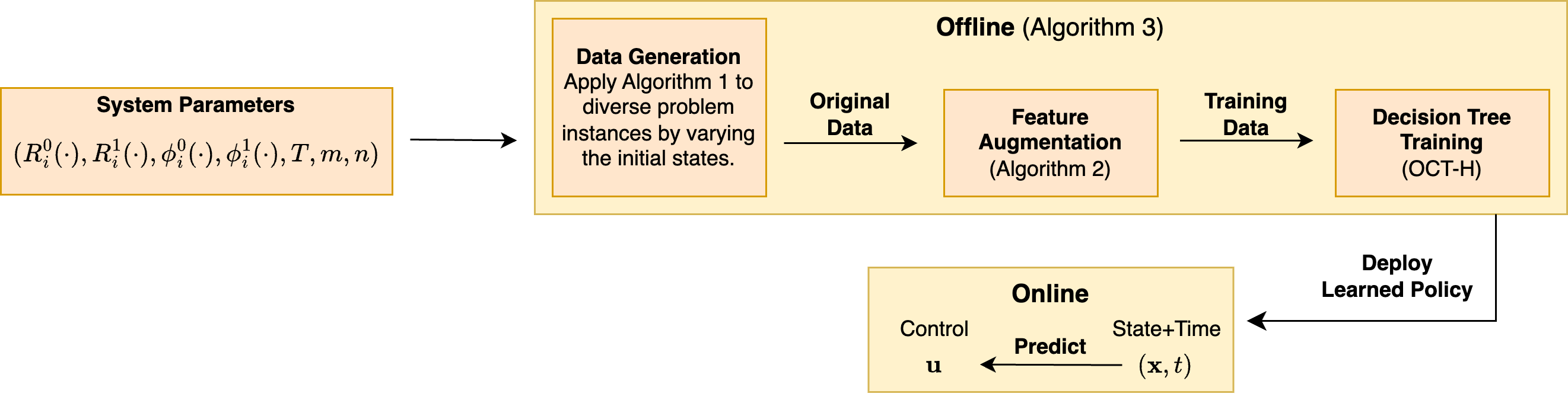}
    \caption{Overview of the proposed method. With fixed system parameters, Algorithm \ref{alg:shooting} is applied to diverse problem instances to generate a dataset. This dataset is then augmented using Algorithm \ref{alg:augmentation}. Then, OCT-H is used to train a time-dependent state feedback policy. After training is complete, the policy is deployed to predict the corresponding control vector for any new state–time pair.}
    \label{fig: diagram}
\end{figure}

\subsection{Addressing Nonlinearity by Feature Augmentation}

We propose a heuristic approach to choosing nonlinear transformations on the state variables. As \jnm{discussed in Section \ref{s:genfpaH} (see Proposition \ref{pro:optindxpol}), an}  extremal control at time 
$t$ is determined by the relative ordering of the priority indices $\gamma_i(t), i \in [n]$. This implies that switches in effort (i.e., changes in the control vector) occur either when $\gamma_i(t) = \gamma_j(t)$ for some projects $i \neq j$, or when $\gamma_i(t) = 0$. At these points, it becomes indifferent whether we invest effort in project $i$ or $j$, or whether we invest effort in project $i$ or idle, respectively. The set of points in the \jnm{time-state space} where such switching of efforts occur are known as  \emph{switching curves} in the optimal control literature \citep{Sethi2021}. Outside of these curves in the \jnm{time-state space}, the extremal vector is unique and constant. From a supervised learning perspective, the switching curves act as discriminating boundaries in the feature space that separate data points with different labels. However, identifying switching curves or even their functional forms is highly nontrivial. Switching curves can be found by expressing the equalities $\gamma_i(t) = \gamma_j(t)$ only with respect to the state variables $\bm{x}(t)$ and time $t$, yet $\gamma_i(t)$ also depends on the costate variable $y_i(t)$. This means that we must express $y_i(t)$ in terms of $\bm{x}(t)$ and $t$. We propose a heuristic to approximate the relation between $y_i(t)$,$\bm{x}(t)$, and $t$.

\subsubsection{Affine Dynamics}
\label{s:adh}
Consider an interval $[t_s, t_{s+1})$ with a constant control $\bm{u}_s$. We first express the index function $\gamma_i(t)$ only in terms of $x_i(t)$ in this specific interval. 

\begin{proposition}
\label{pro:switching_affine}
For the FRMABP with affine dynamics, in an interval $[t_s, t_{s+1})$ with a constant control $\bm{u}_s$, $\gamma_i(t)$ can be expressed as an affine combination of the terms
\begin{equation}
t, x_i(t), x_i^2(t), \frac{1}{{x_i(t) + \frac{\alpha_i^{0}}{\beta_i^{0}}}}, \frac{1}{{x_i(t) + \frac{\alpha_i^{1}}{\beta_i^{1}}}}.
\label{eq:affine_features}
\end{equation}
\end{proposition}

\begin{proof} 
Using the results in \eqref{eq:affineex}, the relation between $x_i(t)$ and $y_i(t)$ can be described as follows:
\begin{equation*}
y_i(t) = 
\begin{cases}
\begin{aligned}
\displaystyle y_{i,1}(t,u_{s,i}) \triangleq & \; -\frac{r_i(u_{s,i})}{\beta_i(u_{s,i})} + 
\frac{\left( x_i(t_s) + \frac{\alpha_i(u_{s,i})}{\beta_i(u_{s,i})} \right) \left( y_i(t_s) + \frac{r_i(u_{s,i})}{\beta_i(u_{s,i})} \right)}{x_i(t) + \frac{\alpha_i(u_{s,i})}{\beta_i(u_{s,i})}}, \\
& \; \text{if } \beta_i(u_{s,i}) \neq 0, 
\end{aligned} \\[3ex]
\begin{aligned}
\displaystyle y_{i,2}(t,u_{s,i}) \triangleq & \; y_i(t_s) - r_i(u_{s,i}) \frac{x_i(t) - x_i(t_s)}{\alpha_i(u_{s,i})}, \\
& \; \text{if } \beta_i(u_{s,i}) = 0, \alpha_i(u_{s,i}) \neq 0, r_i(u_{s,i}) \neq 0, 
\end{aligned} \\[2ex]
\begin{aligned}
\displaystyle y_{i,3}(t,u_{s,i}) \triangleq & \; x_i(t) - x_i(t_s) + y_i(t_s) - r_i(u_{s,i})(t - t_s), \\
& \; \text{if } \beta_i(u_{s,i}) = \alpha_i(u_{s,i}) = 0, r_i(u_{s,i}) \neq 0, 
\end{aligned} \\[2ex]
\begin{aligned}
\displaystyle y_{i,4}(t,u_{s,i}) \triangleq & \; y_i(t_s), \\
& \; \text{if } \beta_i(u_{s,i}) = r_i(u_{s,i}) = 0. 
\end{aligned}
\end{cases}
\end{equation*}

\noindent Given that the extremal control vector is always binary, as stated in Proposition \ref{pro:optindxpol}, this relation leads to the unified expression:
\begin{equation}\label{eq:combined}
\begin{aligned}
y_i(t) &=  \sum_{j=1}^{4}\bigg(C_{1,j}y_{i,j}(t,1) + C_{0,j}y_{i,j}(t,0)\bigg),
\end{aligned}
\end{equation}

\noindent where only one of the coefficients $C_{u,j}, u \in \{0,1\}, j \in [4]$, is non-zero, depending on the values of $\alpha_i(u_{s,i}), \beta_i(u_{s,i}), r_i(u_{s,i})$.

Subsequently, we substitute the expression \eqref{eq:combined} into the index function 
$$\gamma_i(t) = r_{i}^1x_i(t) - c_i^{1} - r_{i}^0x_i(t) + c_i^{0} + y_i(t)[\alpha_i^{1} + \beta_i^{1}x_i(t) - \alpha_i^{0} - \beta_i^{0}x_i(t)],$$ resulting in an expression for $\gamma_i(t)$ that involves an affine combination of the terms only with respect to $x_i(t)$ given in \eqref{eq:affine_features}. Here, the terms $\frac{1}{{x_i(t) + \frac{\alpha_i^{0}}{\beta_i^{0}}}}$ and $\frac{1}{{x_i(t) + \frac{\alpha_i^{1}}{\beta_i^{1}}}}$ correspond to $y_{i,1}(t,u_{s,i})$, while $x_i^2(t)$ corresponds to $y_{i,2}(t,u_{s,i})$ and $y_{i,3}(t,u_{s,i})$, and $t$ corresponds to $y_{i,3}(t,u_{s,i})$.  
\end{proof}

The initial values $y_i(t_s)$ and $x_i(t_s)$ that appear in the coefficients of the affine combination in Proposition \ref{pro:switching_affine} should vary depending on the specific time interval and the initial state $\bm{x}(0)$ that led to that interval. In other words, these coefficients remain constant only within the specific interval considered. Thus, these values should be treated also as time and state-dependent, rather than as global constants. This means that the relation between $\gamma_i(t)$ and $x_i(t)$ described in Proposition \ref{pro:switching_affine} does not globally apply across the entire state-time space.

We propose simply approximating $\gamma_i(t)$ with the affine combination of the terms in \eqref{eq:affine_features}, regardless of time and state. This approach treats the coefficients of the affine combination as constants to be learned, and allows us to approximate any switching curve using the affine combination of the terms in \eqref{eq:affine_features}, defined for all $i \in [n]$. Therefore, for each $i \in [n]$, we augment the original feature vectors with the terms in  \eqref{eq:affine_features}. Applying OCT-H to these augmented feature vectors enables the learning of nonlinear switching curves within the state space. 

Note that while our approach assumes implicit time-dependence of switching curves through the inclusion of time as part of the feature vector in each original data point, explicit time dependencies are not separately modeled in the approximation of $\gamma_i(t)$.



\subsubsection{Quadratic Dynamics}
\label{s:qdh}
For the quadratic case, we apply the same principle as in the affine dynamics, leading to the following proposition. 

\begin{proposition}
\label{pro:switching_quad}

For RMABPs with quadratic dynamics, in an interval $[t_s, t_{s+1})$ with a constant control $\bm{u}_s$, $\gamma_i(t)$ can be expressed as an affine combination of the terms
\begin{equation}
x_i(t), \frac{1}{x_i(t)}, \frac{1}{{x_i(t) + \frac{\alpha_i^{0}}{\beta_i^{0}}}}, \frac{1}{{x_i(t) + \frac{\alpha_i^{1}}{\beta_i^{1}}}}.
\label{eq:quad_features}
\end{equation}
\end{proposition}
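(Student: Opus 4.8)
The plan is to mirror the proof of Proposition~\ref{pro:switching_affine}, working on a fixed subinterval $[t_s,t_{s+1})$ with constant binary control $\bm{u}_s$, and to obtain a closed-form relation expressing the costate $y_i(t)$ in terms of the state $x_i(t)$ alone, which can then be substituted into the index function $\gamma_i(t)$. First I would recall from \eqref{eq:quadex} that on this subinterval both $x_i(t)$ and $y_i(t)$ have explicit expressions in terms of $e^{\alpha_i(u_{s,i})t}$ and the constants $K$ and $G$. From the first line of \eqref{eq:quadex}, solving for $e^{\alpha_i(u_{s,i})t}$ gives
\begin{equation*}
K e^{\alpha_i(u_{s,i})t} = \frac{x_i(t)}{\alpha_i(u_{s,i}) + \beta_i(u_{s,i})x_i(t)},
\end{equation*}
so that $1 - K\beta_i(u_{s,i})e^{\alpha_i(u_{s,i})t} = \alpha_i(u_{s,i})/\big(\alpha_i(u_{s,i}) + \beta_i(u_{s,i})x_i(t)\big)$ and $e^{-\alpha_i(u_{s,i})t}$ is a constant multiple of $\big(\alpha_i(u_{s,i}) + \beta_i(u_{s,i})x_i(t)\big)/x_i(t)$. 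Substituting these two identities into the expression for $y_i(t)$ in \eqref{eq:quadex} eliminates the exponentials entirely, leaving $y_i(t)$ as a function of $x_i(t)$ only (with coefficients depending on $K$, $G$, $\alpha_i(u_{s,i})$, $\beta_i(u_{s,i})$, $r_i(u_{s,i})$, which are constant on the subinterval).

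Next I would simplify that function of $x_i(t)$. The first term of $y_i(t)$ carries a factor $e^{-\alpha_i(u_{s,i})t}(1-K\beta_i(u_{s,i})e^{\alpha_i(u_{s,i})t})^2$, which becomes a constant times $\big(\alpha_i(u_{s,i})+\beta_i(u_{s,i})x_i(t)\big)/x_i(t)$, i.e. a constant plus a constant times $1/x_i(t)$; the second term carries $(1-K\beta_i(u_{s,i})e^{\alpha_i(u_{s,i})t})e^{-\alpha_i(u_{s,i})t}$, which likewise reduces to a constant plus a constant times $1/x_i(t)$. Hence $y_i(t)$ is an affine combination of $1$ and $1/x_i(t)$ on the subinterval. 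As in the affine case, I would also record the degenerate sub-cases (e.g.\ $r_i(u_{s,i})=0$), where $y_i(t)$ is even simpler, so that a single unified expression in terms of $1$ and $1/x_i(t)$ covers all of them; since the control is binary by Proposition~\ref{pro:optindxpol}, the relevant constants $\alpha_i^0/\beta_i^0$ and $\alpha_i^1/\beta_i^1$ are the two possible values of $\alpha_i(u_{s,i})/\beta_i(u_{s,i})$.

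Finally I would substitute this expression for $y_i(t)$ into
\begin{equation*}
\gamma_i(t) = r_{1,i}x_i(t) - c_{1,i} - r_{0,i}x_i(t) + c_{0,i} + y_i(t)\big[\alpha_{1,i}x_i(t) + \beta_{1,i}x_i^2(t) - \alpha_{0,i}x_i(t) - \beta_{0,i}x_i^2(t)\big].
\end{equation*}
The bracketed factor is a polynomial in $x_i(t)$ of degree two with no constant term, i.e.\ $x_i(t)$ times an affine function of $x_i(t)$; multiplying it by $y_i(t)$, which is a constant plus a constant times $1/x_i(t)$, produces terms proportional to $x_i(t)$, to $1$, and — after partial-fraction manipulation of $\big(\alpha_i(u_{s,i})+\beta_i(u_{s,i})x_i(t)\big)\big/\big(x_i(t)+\tfrac{\alpha_i^u}{\beta_i^u}\big)$-type ratios — to $\tfrac{1}{x_i(t)+\alpha_i^0/\beta_i^0}$ and $\tfrac{1}{x_i(t)+\alpha_i^1/\beta_i^1}$. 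Collecting everything, $\gamma_i(t)$ is an affine combination of $x_i(t)$, $1/x_i(t)$, $1/\big(x_i(t)+\alpha_i^0/\beta_i^0\big)$, $1/\big(x_i(t)+\alpha_i^1/\beta_i^1\big)$ and a constant, which is the claim in \eqref{eq:quad_features}. The main obstacle I anticipate is purely bookkeeping: carefully tracking which of the $1/x_i(t)$, $1/\big(x_i(t)+\alpha_i^u/\beta_i^u\big)$ terms the two pieces of $y_i(t)$ contribute after multiplication by the degree-two bracket, and verifying that no genuinely new functional form (e.g.\ $x_i^2(t)$ or a double pole) survives — the cancellation that keeps the degree down is the delicate point, exactly as the absence of an intercept in $\phi_{u,i}$ was flagged as simplifying in the text.
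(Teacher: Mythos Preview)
Your overall strategy matches the paper's exactly: eliminate the exponentials from \eqref{eq:quadex} to write $y_i(t)$ as a function of $x_i(t)$ on the subinterval, then substitute into $\gamma_i(t)$. The gap is a computational slip in the first term of $y_i$. You correctly note it carries the factor $e^{-\alpha_i t}(1-K\beta_i e^{\alpha_i t})^{2}$, but with your substitutions $e^{-\alpha_i t}=K(\alpha_i+\beta_i x_i)/x_i$ and $1-K\beta_i e^{\alpha_i t}=\alpha_i/(\alpha_i+\beta_i x_i)$ this product is $K\alpha_i^{2}\big/\!\big[x_i(\alpha_i+\beta_i x_i)\big]$, \emph{not} a constant times $(\alpha_i+\beta_i x_i)/x_i$. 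After partial fractions, $y_i(t)$ is therefore an affine combination of $1/x_i$ and $1/\big(x_i+\alpha_i(u_{s,i})/\beta_i(u_{s,i})\big)$, not of $1$ and $1/x_i$. The paper arrives at the same thing by first factoring $e^{-\alpha_i t}(1-K\beta_i e^{\alpha_i t})$ out of both terms of $y_i$ in \eqref{eq:quadex} before substituting, yielding
\[
y_i(t)=\Big(\tfrac{K(\alpha_i+\beta_i x_i)}{x_i}-K\beta_i\Big)\Big(G-G\beta_i\tfrac{x_i}{\alpha_i+\beta_i x_i}-\tfrac{r_i}{K\alpha_i\beta_i}\Big).
\]

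This matters because your claimed form $y_i=A+B/x_i$, multiplied by the degree-two bracket $(\alpha_{1,i}-\alpha_{0,i})x_i+(\beta_{1,i}-\beta_{0,i})x_i^{2}$, would leave an $A(\beta_{1,i}-\beta_{0,i})x_i^{2}$ term with nothing to cancel it---exactly the ``genuinely new functional form'' you flagged as the delicate point. With the correct $y_i$, the $1/x_i$ piece times the bracket is linear in $x_i$, and the $1/(x_i+\alpha_i/\beta_i)$ piece times the bracket gives, by polynomial division, a linear part plus a multiple of $1/(x_i+\alpha_i/\beta_i)$; no $x_i^{2}$ survives. Since $u_{s,i}\in\{0,1\}$, the shift $\alpha_i(u_{s,i})/\beta_i(u_{s,i})$ is one of $\alpha_i^{0}/\beta_i^{0}$ or $\alpha_i^{1}/\beta_i^{1}$, and collecting terms gives the feature list \eqref{eq:quad_features}. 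So the plan is right; only the simplification of the squared factor needs to be redone.
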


\begin{proof} 
The results in \eqref{eq:quadex} lead to the following expression:
$$
y_i(t) = \Bigg( \frac{K(\alpha_i(u) + x_i(t)\beta_i(u))}{x_i(t)} - K\beta_i(u) \Bigg) \Bigg(G - G\beta_i(u)\frac{x_i(t)}{\alpha_i(u) + x\beta_i(u)} - \frac{r_i(u)}{K\alpha_i(u)\beta_i(u)} \Bigg)
$$
We plug this expression into the following index function to get the results:
$$
\gamma_i(t) = u(r^1_ix_i(t) - c_i^1) + (1-u)(r^0_ix_i(t) - c_i^0) + y_i(t)\Bigg(u(\alpha_i^{1}x_i + \beta_i^{1} x_i^2) + (1-u)(\alpha_i^{0}x_i + \beta_i^{0} x_i^2) \Bigg),
$$
\end{proof}

\noindent Following the approach used for affine dynamics, we augment the original feature vector in each data point with the terms in \eqref{eq:quad_features}, for all $i \in [n]$.

\subsubsection{Data-driven Feature Augmentation}
\label{s:ddfs}
In practice, not all of the terms listed above are needed, as the expression of $y_i(t)$ with respect to $x_i(t)$ depends on the problem parameters ${\alpha}_{u_{s,i},i}, {\beta}_{u_{s,i},i}, {r}_{u_{s,i},i}$, and on whether $u_{s,i}$ is 1 or 0. This means some of the expressions in \eqref{eq:affine_features} and \eqref{eq:quad_features} may not be used at all. After we have generated the training data, we can augment the feature vector in a data-driven way, by investigating the control variables that occurred in the generated data. For example, in the models with affine dynamics, assume $u_i = 1$ for all data points generated. If $\beta_i^{1} \neq 0$, then the terms resulting from the cases $y_{i,1}(t,0)$ and $y_{i,j}(t,u_i), j = 2,3,4 $, are unnecessary because only $C_{1,1}$ in \eqref{eq:combined} will be non-zero. Hence, we augment the feature vector only with $\frac{1}{{x_i(t) + \frac{\alpha_i^{1}}{\beta_i^{1}}}}$. This principle generalizes to data-driven feature augmentation process, provided in Algorithm \ref{alg:augmentation}. We describe the entire offline phase, beginning from data generation to decision tree training, in Algorithm \ref{alg:main}.

\begin{algorithm}
\KwInput{$\mathcal{D}, \mathcal{U}, n$}
\KwOutput{Augmented feature vectors $\mathcal{D}^{'}$} 
\textbf{Initialization: $ \mathcal{D}^{'} = \mathcal{D}$}

Identify $v_i = \{u_i \in \mathbb{R}: \bm{u} \in \mathcal{U} \}, \; \forall i \in [n]$.\\

\textbf{Affine Dynamics} \\ 
\For{$i \in [n]$}{
  \For{$u_i \in v_i$}{
    \For{$ \big(\bm{x},t\big) \in \mathcal{D}^{'}$}{
        \If{$\beta_i(u_{i}) \neq 0$}{Add $\frac{1}{x_i + \frac{\alpha_i({u_i})}{\beta_i({u_i})}}$ as an additional entry to the feature vector $\big(\bm{x},t\big)$.}
        \ElseIf{$r_i(u_i) \neq 0$}{Add $x_i^2$ as an additional entry to the feature vector $\big(\bm{x},t\big)$.}
    }
  }
}

\vspace{3mm}

\textbf{Quadratic Dynamics} \\

\For{$i \in [n]$}{
  \For{$ \big(\bm{x},t\big) \in \mathcal{D}^{'}$}{
          Add $\frac{1}{x_i}$ as an additional entry to the feature vector $\big(\bm{x},t\big)$. \\
    \For{$u_i \in v_i$}{
         Add $\frac{1}{x_i + \frac{\alpha_i({u_i})}{\beta_i({u_i})}}$ as an additional entry to the feature vector $\big(\bm{x},t\big)$.
    }
  }
}

\caption{Data-Driven Feature Augmentation}
\label{alg:augmentation}
\end{algorithm}

\begin{algorithm}
\KwInput{$M, \{t_1, \dots, t_N\}, \epsilon, \delta, R^0_i(\cdot), R^1_i(\cdot), \phi^0_i(\cdot), \phi^1_i(\cdot), T, m, n, \bm{x}_0$}
\KwOutput{$
\pi: \mathcal{X} \times [0,T] \mapsto [0,1]^{n}
$} 

\textbf{Initialization: $ \mathcal{D} = \emptyset, \mathcal{U} = \emptyset,  j = 1$} 

\vspace{3mm}

\textbf{1. Data Generation} \\

\For{$j = 1, \dots, M$}{
    Sample an initial state ${\bm{x}_0}$ from the state space $\mathcal{X}$. \\
    Solve Problem \eqref{eq:genrmabp} with the initial state $\bm{x}_0$ using Algorithm \ref{alg:shooting}. \\
    $\mathcal{D} \leftarrow \mathcal{D} \cup \bigg\{\big(\bm{x}^*(t_\ell),t_\ell\big)\bigg\}_{\ell=1}^{N}$ \\
    $\mathcal{U} \leftarrow \mathcal{U} \cup \bigg\{\bm{u}^*(t_\ell)\bigg\}_{\ell=1}^{N} $ \\
    $j \leftarrow j + 1$
}

\vspace{3mm}

\textbf{2. Feature Augmentation } \\

Use Algorithm \ref{alg:augmentation} to augment the feature vectors in $\mathcal{D}$ and denote the augmented feature vectors as $\mathcal{D}^{'}$.

\textbf{3. Training} \\
Use OCT-H to train a classification tree. The feature vectors are $\mathcal{D}^{'}$ and the target vectors are $\mathcal{U}$. 

\caption{OCT-H for Fluid Restless Multi-armed Bandits.}
\label{alg:main}
\end{algorithm}

\subsection{Example: Optimal Control of Admission and Routing to Parallel Infinite-Server Queues}
\label{s:cocarpisq}

We study the optimal control of admission and routing to parallel infinite-server queues as a special case of models with affine dynamics. Due to its simple structure, we can derive closed-form expressions for the index functions $\gamma_i(t), i\in [n]$. For small $n$, this allows us to obtain a simple closed-form extremal policy. Then, we use Algorithm \ref{alg:main} to learn a decision tree policy and compare it with the closed-form extremal policy.

\subsubsection{Deriving the Closed-form Index Functions}
\label{s:queueingindex}

Consider a system with $n$ parallel fluid queues with infinite buffers. 
Fluid arrives to the system at rate $\lambda$. 
The controller chooses the proportion $u_i(t) \in [0, 1]$ to be routed to each queue $i$ at each time $t$. 
The system equation for the buffer contents $x_i(t)$ of queue $i$ is 
\begin{equation*}
\label{eq:dotxitarpq}
\dot{x}_i(t) = 
\lambda u_i(t) - \mu_i x_i(t),
\end{equation*}
which corresponds to the fluid analog of an infinite-server queue.
The remaining proportion, $1 - \sum_{i=1}^n u_i(t)$, is rejected, incurring a cost rate $R$. Note that $\sum_{i=1}^n u_i(t) \leqslant 1$.
Furthermore, queue $i$ incurs holding costs at rate $C_i$. 

The goal is to  minimize the following cost objective over a finite horizon $[0, T]$:
\[
\min_{\bm{u}(\cdot), \bm{x}(\cdot)} \, \int_0^T \big[R \lambda (1 - \sum_i u_i(t)) + \sum_i C_i x_i(t) \big] \, dt,
\]
which in turn is reformulated in maximization form as 
\begin{equation*}
\label{eq:ocparpq}
\max_{\bm{u}(\cdot), \bm{x}(\cdot)} \, \int_0^T \sum_i [R \lambda u_i(t) - C_i x_i(t)] \, dt.
\end{equation*}
The upper bound in the state space of each queue $i$ is $H_i \triangleq \infty$. We define the optimal control of admission and routing to parallel infinite-server queues as the following problem:

\begin{equation}
\begin{aligned}
\label{eq:routing}
&\max_{\bm{u}(\cdot), \bm{x}(\cdot)} &&\int_0^T \sum_{i=1}^{n} [R \lambda u_i(t) - C_i x_i(t)] \, dt \\
&\text{s.t.} \quad && \dot{x}_i(t) = \lambda u_i(t) - \mu_i x_i(t), \quad  \forall i \in [n], \forall t \in [0,T], \\
& \qquad \ && x_i(t) > 0, \quad \forall i \in [n], \forall t \in [0,T],  \\
& \qquad \ &&\bm{x}(0) = \bm{x}_0, \\
& \qquad \ &&0 \leqslant u_i(t) \leqslant 1, \quad \forall i \in [n], \forall t \in [0,T],  \\
& \qquad \ &&\sum_{i=1}^n u_i(t) \leqslant 1, \quad \forall t \in [0,T].
\end{aligned}
\end{equation}

We first prove that the constraints $x_i(t) > 0, \forall i \in [n]$, are automatically satisfied regardless of the control trajectory.

\begin{proposition}
\label{pro:routingpositive}

For Problem \eqref{eq:routing}, if $\bm{x}_0 > 0$, then $\bm{x}(t) > 0, \forall t \in [0,T]$ regardless of the control trajectory.
\end{proposition}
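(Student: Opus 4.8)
The plan is to analyze the scalar ODE $\dot{x}_i(t) = \lambda u_i(t) - \mu_i x_i(t)$ for each $i \in [n]$ separately, since the dynamics decouple across queues. The key observation is that the control $u_i(t)$ enters only through the nonnegative term $\lambda u_i(t) \geqslant 0$ (because $\lambda > 0$ and $u_i(t) \in [0,1]$), so regardless of the control trajectory the state is bounded below by the solution of the homogeneous-forced worst case, namely the case $u_i \equiv 0$.

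First I would fix $i$ and apply the integrating-factor form of the solution: for any piecewise continuous control $u_i(\cdot)$,
\begin{equation*}
x_i(t) = e^{-\mu_i t} x_i(0) + \int_0^t e^{-\mu_i(t-\tau)} \lambda u_i(\tau)\, d\tau.
\end{equation*}
Both terms on the right-hand side are nonnegative, and the first is strictly positive whenever $x_i(0) > 0$. Hence $x_i(t) \geqslant e^{-\mu_i t} x_i(0) > 0$ for all $t \in [0,T]$. Since this holds for every $i$, we conclude $\bm{x}(t) > \bm{0}$ on $[0,T]$, independent of $\bm{u}(\cdot)$. An alternative, if one prefers not to invoke the closed-form solution, is a contradiction/continuity argument: let $t^\star = \inf\{t : x_i(t) \leqslant 0\}$; by continuity $x_i(t^\star) = 0$, but then $\dot{x}_i(t^\star) = \lambda u_i(t^\star) - 0 = \lambda u_i(t^\star) \geqslant 0$, so $x_i$ cannot be strictly decreasing through zero — one then needs a short argument (e.g. comparing with $e^{-\mu_i t} x_i(0)$ on $[0,t^\star]$) to rule out $x_i$ reaching $0$ at all. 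The integrating-factor route is cleaner and I would present that.

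There is no real obstacle here; the statement is essentially immediate from the sign structure of the forcing term. The only point requiring a word of care is the piecewise-continuity of $u_i(\cdot)$: the integral representation is still valid because $u_i$ is integrable on $[0,T]$ and $x_i(\cdot)$ is the (absolutely continuous) Carathéodory solution, so the formula holds for all $t$ and the bound $x_i(t) \geqslant e^{-\mu_i t} x_i(0)$ is uniform. I would state this one line and conclude.
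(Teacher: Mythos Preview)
Your proof is correct. The integrating-factor representation
\[
x_i(t) = e^{-\mu_i t} x_i(0) + \int_0^t e^{-\mu_i(t-\tau)} \lambda u_i(\tau)\, d\tau
\]
immediately gives $x_i(t) \geqslant e^{-\mu_i t} x_i(0) > 0$, and your remark about the Carath\'eodory solution for piecewise-continuous $u_i$ is the right caveat.

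The paper takes a slightly different route: it restricts to piecewise-constant controls (via Proposition~\ref{pro:optindxpol}), writes the closed-form solution on each subinterval $[t_s,t_{s+1})$ as
\[
x_i(t) = \frac{\lambda u_{s,i}}{\mu_i}\big(1 - e^{-\mu_i(t-t_s)}\big) + x_i(t_s)\,e^{-\mu_i(t-t_s)},
\]
reads off positivity on that subinterval from $x_i(t_s)>0$, and then inducts over subintervals. Your single global formula subsumes this and applies to arbitrary piecewise-continuous (indeed, integrable) controls without any appeal to the piecewise-constant structure or induction; it also yields the explicit uniform lower bound $e^{-\mu_i t}x_i(0)$ for free. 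The paper's version, on the other hand, stays consistent with the subinterval-by-subinterval machinery it has already set up in \eqref{eq:affineex}. Either is fine; yours is the cleaner presentation for this particular lemma.
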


\begin{proof} 
For $t \in [t_s, t_{s+1})$ with a constant control vector $\bm{u}_s \in [0,1]^{n}$, 
\begin{equation*}
\begin{split}
{x}_i(t)  & = x_i(t_s)+ [\frac{\lambda u_{s,i} }{\mu_i} - x_i(t_s)] \, \big[1 - e^{-\mu_i 
 (t-t_s)}\big] \\
& = \frac{\lambda u_{s,i} }{\mu_i}(1 - e^{-\mu_i 
 (t-t_s)}) + x_i(t_s)e^{-\mu_i 
 (t-t_s)}
\end{split}
\end{equation*}
Assuming $x_i(t_s) > 0$, it is straightforward that $x_i(t)  > 0$ in the interval $[t_s, t_{s+1})$. It is also clear that $x_i(t_{s+1})  > 0$, again implying $x_i(t)  > 0$ in the interval $[t_{s+1}, t_{s+2})$.  Hence, due to mathematical induction, $\bm{x}_0  > 0$ guarantees that $\bm{x}(t)  > 0, \; \forall t \in [0,T]$. 
\end{proof}

Now, we derive a closed-form expression for the index function.

\begin{proposition}
\label{pro:routingpolicy}

The index function for Problem \eqref{eq:routing} is
\begin{equation}
\label{eq:arcgammait}
\gamma_i(t) = R - {C_i} \frac{\lambda}{\mu_i} \big[1- e^{-\mu_i (T-t)}\big] , \quad i \in [n], t \in [0, T].
\end{equation}
\end{proposition}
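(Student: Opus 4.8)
The plan is to derive the costate trajectory $y_i(t)$ in closed form from the Pontryagin conditions and substitute it into the definition of the index function $\gamma_i(t)$ specialized to this problem. Recall from Lemma~\ref{lemma:pmp}(a) and the affine structure (here $\phi_{0,i}(x) = -\mu_i x$, $\phi_{1,i}(x) = \lambda - \mu_i x$, $R_{0,i}(x) = -C_i x$, $R_{1,i}(x) = R\lambda - C_i x$) that the costate ODE for project $i$ is $\dot{y}_i(t) = C_i + \mu_i y_i(t)$, which is independent of the control $u_i(t)$ because $\beta_{0,i} = \beta_{1,i} = -\mu_i$ and $r_{0,i} = r_{1,i} = -C_i$ (in the paper's sign conventions, $r_{u,i} x - c_{u,i}$ with $r_{u,i} = -C_i$). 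This is the key simplification: since the state-equation slope in $x_i$ and the reward slope in $x_i$ are the same under $u_i = 0$ and $u_i = 1$, the costate dynamics do not depend on the chosen control at all, so $y_i(\cdot)$ solves a single linear ODE on all of $[0,T]$ with no case splitting across subintervals.

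First I would solve $\dot{y}_i(t) = C_i + \mu_i y_i(t)$ on $[0,T]$ with the transversality condition $y_i(T) = 0$ from Lemma~\ref{lemma:pmp}(b). Integrating backward from $T$ gives $y_i(t) = -\frac{C_i}{\mu_i}\bigl[1 - e^{-\mu_i(T-t)}\bigr]$; one can check directly that this satisfies both the ODE and $y_i(T) = 0$. (Equivalently, this is the $\beta_i(u_{s,i}) = -\mu_i \neq 0$ branch of the costate formula in~\eqref{eq:affineex} applied on the single interval $[t,T]$ with terminal value $0$.) Note $y_i(t) \leqslant 0$ throughout, which is intuitively the shadow price of buffer content.

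Next I would plug this into the index function. Here $\gamma_i(t) = R_{1,i}(x_i^*(t)) - R_{0,i}(x_i^*(t)) + y_i(t)\bigl[\phi_{1,i}(x_i^*(t)) - \phi_{0,i}(x_i^*(t))\bigr]$. The reward difference is $(R\lambda - C_i x_i^*(t)) - (-C_i x_i^*(t)) = R\lambda$, and the drift difference is $(\lambda - \mu_i x_i^*(t)) - (-\mu_i x_i^*(t)) = \lambda$, so $\gamma_i(t) = R\lambda + \lambda y_i(t) = \lambda\bigl(R - \frac{C_i}{\mu_i}\bigl[1 - e^{-\mu_i(T-t)}\bigr]\bigr)$. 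This matches~\eqref{eq:arcgammait} up to the overall factor $\lambda$, which is immaterial for the ranking/sign decisions that determine the control via~\eqref{eq:pmpdmLP} (and the paper evidently normalizes it out, or absorbs it). I would remark that $\gamma_i(t)$ depends only on $t$, not on the state $x_i^*(t)$, which is the essential qualitative consequence worth highlighting.

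The main obstacle is essentially cosmetic rather than mathematical: reconciling the paper's general sign conventions ($R_{u,i}(x) = r_{u,i}x - c_{u,i}$) with the problem-specific data of~\eqref{eq:routing}, and tracking the factor of $\lambda$ so that the stated formula~\eqref{eq:arcgammait} comes out exactly. There is no genuine analytic difficulty, since the control-independence of the costate ODE eliminates the only subtle step one might expect (piecing together $y_i$ across subintervals with different controls). I would therefore spend most of the writeup being careful about constants and making explicit the observation that $\gamma_i$ is state-free, which is what makes the subsequent closed-form policy for small $n$ possible.
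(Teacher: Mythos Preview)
Your proposal is correct and follows essentially the same route as the paper: observe that the costate ODE is control-independent (since the $x$-slopes of both the dynamics and the reward are the same under $u=0$ and $u=1$), solve it with the terminal condition $y_i(T)=0$ to get $y_i(t) = -\frac{C_i}{\mu_i}\bigl[1-e^{-\mu_i(T-t)}\bigr]$, and substitute into the index formula. Your observation about the overall factor $\lambda$ is also on target; the paper's proof writes $\gamma_i(t) = R + \lambda y_i(t)$, i.e.\ it has tacitly divided the general index by $\lambda$, which is harmless for the ranking that drives~\eqref{eq:pmpdmLP}.
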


\begin{proof} 
First, note that the costate $y_i(t)$ satisfies the following ODE in this model:
\[
\dot{y}_i(t) = C_i + \mu_i y_i(t), \quad t \in [0, T].
\]
The index function derived from Lemma \ref{lemma:pmp} is $\gamma_i(t) = R + \lambda y_i(t)$.

For $t \in [t_s, t_{s+1}),$ 
\begin{equation*}
\label{eq:affinedotxysolarc}
\begin{split}
x_i(t) & =  x_i(t_s)+ [\frac{\lambda u_{s,i} }{\mu_i} - x_i(t_s)] \, \big[1 - e^{-\mu_i 
 (t-t_s)}\big] \\
y_i(t) & = y_i(t_s) + \big[- \frac{C_i}{\mu_i} - y_i(t_s)\big] \big[1 - e^{\mu_i 
 (t-t_s)}\big].
\end{split}
\end{equation*}
Applying the boundary condition $y_i(T) = 0$, $y_i(t)$ is given by 
\begin{equation}
\label{eq:yitsolarc}
y_i(t) = - \frac{C_i}{\mu_i} \big[1- e^{-\mu_i (T-t)}\big], \quad t \in [0, T].
\end{equation}

Therefore, substituting (\ref{eq:yitsolarc}) into the index function $\gamma_i(t)$, we obtain the closed-form index expression. 
\end{proof}

By Proposition \ref{pro:routingpolicy}, solving Problem \eqref{eq:routing} is significantly simplified. At each time $t$, we compute the index functions for all projects and allocate efforts accordingly.

\subsubsection{Learning a State Feedback Policy using OCT-H}
\label{s:octhex}

In this section, we present a state feedback policy for Problem \eqref{eq:routing} learned by OCT-H. We consider the problem with $ m = 1$ and $n = 2$. For this problem, at each time $t$, we only need to compare $\gamma_1(t), \gamma_2(t)$ and 0 to determine the extremal control. From the derivation above, it is straightforward that the extremal feedback policy depends only on the time $t$ and not on the state variable. Given the parameters $\mu_1 = 0.5, \mu_2 = 1, C_1 = 1, C_2 = 1.5, \lambda = 1, R = 3, T = 10 $, the extremal feedback policy is:
\begin{equation}
\pi(\bm{x}, t) = 
\begin{cases}
\displaystyle (0,1), & \text{if } t < 10 - \log{9} \approx 7.802, \\ 
\displaystyle (1,0), & \text{if } t \geq 10 - \log{9} \approx 7.802.
\end{cases}
\label{eq:queuopt}
\end{equation}

\noindent To generate a training data, we sample 1000 initial states uniformly from the interval $(0,10)^2$ and solve each instance. For each solved instance, we extract 10 feature vectors along with their associated control vectors from each subinterval with constant control. The policy learned by OCT-H is given in Figure \ref{fig:routing}. We observe that this policy is almost identical to the extremal policy in \eqref{eq:queuopt}, with only slight numerical differences.

\begin{figure}
    \centering
    \includegraphics[scale = 0.5]{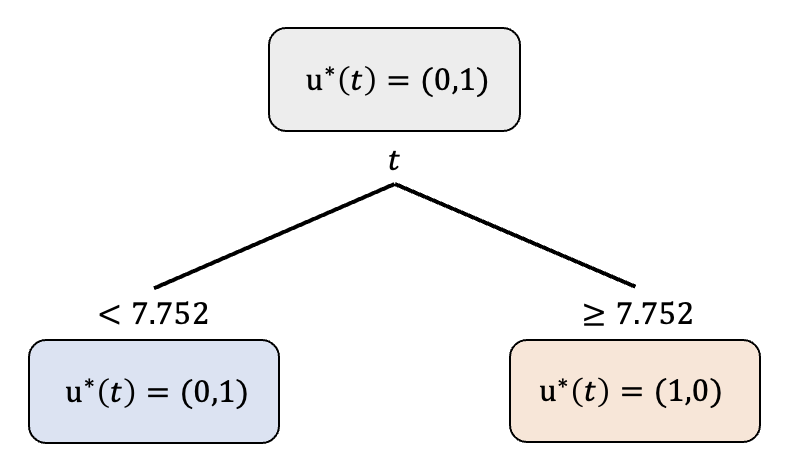}
    \caption{The decision tree OCT-H learned for the infinite server routing problem with $n = 2$.}
    \label{fig:routing}
\end{figure}

\section{Computational Experiments}
\label{sec:exp}

In this section, we present the results of computational experiments. We conduct experiments on three distinct problems with varying sizes and time horizons. The first problem, the machine maintenance problem, has affine dynamics. The other two problems, the epidemic control and the fisheries control problems, have quadratic dynamics. We evaluate the quality of the policy learned using Algorithm \ref{alg:main}, and also assess the relative speed-up compared to Algorithm \ref{alg:shooting}. For all problems considered, the state trajectories automatically satisfy the state constraints $x_i(t) \in (0,H_i), \forall i \in [n],$ regardless of the control trajectories. The proof is provided in Appendix \ref{sec:appendix_proof}.

\textcolor{black}{We note that in all instances considered, the chattering phenomenon did not occur; Algorithm 1 terminated successfully and produced extremal trajectories with a finite number of pieces. Further, when running Algorithm \ref{alg:shooting}, we used three random initializations of $\bm{y}_{0,0}$ for each instance. In all cases, across different initializations for a single instance, the resulting trajectories were either identical or the algorithm simply failed to converge. In other words, we found only a single trajectory satisfying the PMP conditions for each instance. In general, a solution satisfying the PMP conditions need not be globally or even locally optimal, and a optimum may not even exist. However, if an optimum does exist, the solution we found is a strong candidate, given that no other trajectory satisfying the PMP conditions was identified despite multiple random initializations.}

\subsection{Problem Description}
\label{subsec:desc}
\subsubsection{Machine Maintenance}
We consider the classic machine maintenance problem studied in \citep{ks71}. In this problem, for each machine $i \in [n]$, the state variable $x_i(t)$ represents the cumulative probability that machine $i$ has failed by time $t$, while $u_i(t)$ represents the preventive maintenance rate for machine $i$ at time $t$. For each machine \( i \), we denote its natural failure rate, maintenance cost, junk value, and revenue as \( h_i, C_i, L_i, R_i \), respectively. {\color{black} The {failure rate} refers to the instantaneous probability of a machine failing at a given time, provided it is still operational. The natural failure rate \( h_i \) is the baseline failure rate in the absence of maintenance. The junk value \( L_i \) represents the salvage value of the machine once it has failed, and the revenue \( R_i \) indicates the reward obtained from operating machine \( i \).} The primary objective is to maximize the total profit generated by the machines, adjusted for any junk values if a machine fails prematurely.
The objective is to maximize 
$$\int_0^T \sum_{i=1}^n \Big[R_i - C_ih_iu_i(t) + L_i(h(1 - u_i(t))(1 - x_i(t)))\Big] \, dt$$ 
subject to the state equation 
$$\dot{x}_i(t) = h(1 - u_i(t))(1 - x_i(t)).$$ 
To align this problem with the formulation in Section \ref{sec:affine}, we can set the parameters as $\alpha_i^{1} =  \beta_i^{1} = 0$, and $\alpha_i^{0} = h_i,  \beta_i^{0} = - h_i$. In this model, the state trajectory stays in $(0,1)^{n}$ regardless of the control policy.

To generate a problem, we sample the parameters $\bm{h}, \bm{C}, \bm{L}, \bm{R}$ uniformly at random from the intervals $[0, 0.5]^n, [1,3]^n, [2,4]^n, [2,4]^n,$ respectively.

\subsubsection{Epidemic Control}

This problem is based on the SIS epidemic model studied in \citep{pollett24}. It has been shown that the fraction of infected individuals in a stochastic version of the SIS epidemic model, following a continuous-time Markov chain, converges in probability to the solution of a continuous-time deterministic differential equation. The epidemic control problem we consider is derived from this continuous-time deterministic differential equation.

In this problem, the state variable $x_i(t)$ represents the fraction of infected people in subpopulation $i$ and $u_i(t)$ represents the intervention effort for subpopulation $i$ at time $t$. Let $C_i$ be the unit social cost per fraction of infected population, and $P_i$ be the unit intervention cost. The transmission rates $\lambda_i^{1}$ and $\lambda_i^{0}$ correspond to active and inactive intervention cases in subpopulation $i$, respectively. Similarly, $\mu_i^{1}$ and $\mu_i^{0}$ denote the recovery rates under active and inactive intervention, respectively. The objective is to minimize the total cost 
$$\int_0^T \sum_{i=1}^n [C_i(t)x_i(t) + P_iu_i(t)] \, dt$$ 
subject to the state equation 
$$
\dot{x}_i(t) = u_i(t)\bigg[\lambda_i^{1} x_i(t)\bigg(1 - \frac{\mu_i^{1}}{\lambda_i^{1}}- x_i(t)\bigg)\bigg]\ + (1- u_i(t))\bigg[\lambda_i^{0} x_i(t)\bigg(1 - \frac{\mu_i^{0}}{\lambda_i^{0}}- x_i(t)\bigg)\bigg]
.$$ 
In this model, the state trajectory always stays in $(0,1)^{n}$.

To generate a problem, we make two assumptions. First, we assume that in each subpopulation $i$, the intervention cost is always lower than the cost of infection. Second, $\lambda_i^{1} < \mu_i^{1}$ and $\lambda_i^{0} > \mu_i^{0}$, to reflect the effects of active intervention. To implement these assumptions, we first sample $\bm{C}$ uniformly at random from the interval $[0, 1]^n$. We then multiply $\bm{C}$ element-wise  with another vector sampled uniformly at random from $[0, 1]^n$ to compute $\bm{P}$. Next, we sample $\bm{\lambda}^1$ and $\bm{\mu}^0$ uniformly at random from $[2,4]^{n}$. Finally, we randomly sample two vectors uniformly from $[0,0.5]^{n}$ and add them to $\bm{\lambda}^1$ and $\bm{\mu}^0$ to obtain $\bm{\mu}^1$ and $\bm{\lambda}^0,$ respectively. {\color{black} These parameter ranges follow the numerical examples provided in \citet{pollett24}, and similar parameter ranges have been widely adopted in other SIS-model studies \citep{britton_time_2010, hiebeler_moment_2006, BALL2002333}.}

\subsubsection{Fisheries Control}

This problem is based on the classic logistic model of population growth \citep{Ba2011}, extended to the optimal control of fisheries studied by \citep{schae57, gordon91}. In this example, $x_i(t)$ and $u_i(t)$ represent the size of population $i$ at time $t$ and the fishing effort on population $i$ at time $t$, respectively. $r_i$ denotes the intrinsic growth rate of population $i$, and $H_i$ is the maximum sustainable population size. For each population $i$, its catchability coefficient, the unit price of landed fish, and the unit cost of effort are denoted $q_i$, $p_i$, and $C_i$, respectively. The objective is to maximize 
$$\int_0^T \sum_{i=1}^n (p_i q_i x_i(t) - C_i) u_i(t) \, dt$$
subject to the state equation
\[
\dot{x}_i(t) = r_i \left( 1 - \frac{x_i(t)}{H_i} \right) x_i(t) - q_i x_i(t) u_i(t).
\]
To align this problem with the formulation in Section \ref{sec:quadratic}, we can set the parameters as $\beta_i^{0} = \beta_i^{1} = -\frac{r_i}{H_i}$, $\alpha_i^{0} = r_i$, and $\alpha_i^{1} = r_i - q_i$. In this model, the state trajectory always stays in $\prod_{i=1}^n (0,H_i)$.

To generate a problem, we sample the parameters $\bm{r}, \bm{H},\bm{q}, \bm{p}, \bm{C}$ uniformly at random from the intervals $[0, 0.15]^n, [1,6]^n, [0,0.15]^n, [0,2]^n, [0,0.1]^n$, respectively.

\subsection{Experimental Setup}
\label{subsec:exp_set}

\subsubsection{Evaluation of Computational Efficiency and Policy Quality}
\label{subsec:main_exp}
We randomly generate problem instances with varying initial states and solve them using Algorithm \ref{alg:shooting} to generate training data. We sample 3000 initial states uniformly at random from $\prod_{i=1}^n (0,H_i)$, where $H_i$ represents the bounds specific to each problem. For each solved instance, we extract 10 feature vectors along with their associated control vectors from each subinterval with constant control.  For Algorithm \ref{alg:shooting}, we fix $m = \floor*{0.3n}$, $\epsilon = 0.00001$, and $\delta = 0.0001$. 

We evaluate our approach in two different ways. First, we measure simply how well the learned policy imitates the extremal trajectory generated by Algorithm \ref{alg:shooting}. We evaluate the out-of-sample test accuracy on 1000 data points consisting of state-time and control pairs that the decision tree has not seen during training.

Second, to evaluate the ultimate quality of the learned policies, we apply them to problems with varying initial states and compute the associated objective cost. We compare this cost with the extremal objective cost acquired by Algorithm \ref{alg:shooting}. To compute the objective cost associated with a policy, we discretize the dynamics to approximate the continuous-time trajectory and the integral objective values. We measure the {\color{black}quality} of a policy by subtracting the associated objective cost from the extremal objective cost, and then dividing the result by the absolute value of the associated objective cost. {\color{black}We denote this value as PMP-Gap.} We report the maximum of these values across the test instances to analyze the potential {\color{black}quality} of the learned policies for unseen problems, {\color{black}measured as the gap of attained objective cost from the extremal objective cost.}. The number of test instances is 100.

 To quantify the speed-up compared to the shooting method, we compute the ratio between the time required to solve an instance from scratch using Algorithm \ref{alg:shooting} and the inference time of the trained decision tree. {\color{black}When measuring the runtime of Algorithm \ref{alg:shooting}, we averaged the runtimes over the initializations that successfully converged.} This measurement is averaged over 100 test instances. {\color{black}Since inference times for decision trees are consistently on the order of milliseconds without much variability, we do not report them separately. Instead, we focus on the relative speed-up as it provides a more informative metric.}  We also report the OCT-H training time in minutes and seconds. {\color{black} This reported training time excludes both the feature augmentation stage and the preceding data generation steps. It reflects only the time required to train OCT-H once the training data has been prepared.}

{\color{black}
\subsubsection{Robustness Analysis}
\label{subsec:robust_exp}
In addition to the main experiments described above, we evaluate the robustness of our approach from two distinct perspectives. In the first experiment, conducted on the fisheries control problem with $n = 10$ and $T = 1$, we train a policy using the same parameters as those described in Section \ref{subsec:desc}. After training, we evaluate the policy on test instances whose parameters differ from the training set. Specifically, for each parameter used in training (such as $C_i$ for project $i \in [n]$), we uniformly sample a parameter for the test problem from the range  $\left[(1-\Gamma)C_i,\,(1+\Gamma)C_i\right]$,
where $\Gamma$ denotes the noise level. We consider multiple noise levels and generate 100 random parameter configurations for each noise level (i.e., 100 test instances per noise level). For each noise level, we report the maximum {\color{black}PMP-Gap} and illustrate how {\color{black}PMP-Gap} varies with increasing noise. 

In the second robustness experiment, we re-run the experiments in Section \ref{subsec:main_exp} using system parameters sampled from ranges different from those described in Section \ref{subsec:desc}. This experiment evaluates whether our approach is limited to a specific range of parameters. To avoid redundancy, we report these results in Appendix \ref{sec:appendix_exp}, as they are similar to the results from the main experiments.
}

All experiments in this section were conducted on a MacBook Pro with an Apple M2 Pro chip and 16GB of memory. Software for OCT-H is available at \citet{InterpretableAI}. We tune the maximum depth of the tree by grid searching over the list [5,10,15].

\subsection{Experimental Results}
\label{subsec:exp_result}

In Tables \ref{table:machine}, \ref{table:epidemic}, and \ref{table:fisheries}, we report the experimental results for the machine maintenance, the epidemic control, and the fisheries control problems, respectively. Table \ref{table:data_summary} summarizes the training data for OCT-H, where we report the number of feature variables (after applying Algorithm \ref{alg:augmentation}) and the number of distinct control vectors (i.e., prediction targets). {\color{black} In other words, after applying Algorithm 2 to the original training data, we simply count the number of features and the number of distinct control vectors that appear as prediction targets in the training data.}  Finally, Figure \ref{fig:robustness} presents the results of our robustness experiment. We draw the following conclusions.

\subsubsection*{\textbf{Observations}} 
\begin{itemize}
    \item The out-of-sample classification accuracy is consistently high, never falling below 98\% and frequently achieving 100\%. This indicates that the policy learned using Algorithm \ref{alg:main} imitates the extremal trajectory very well. 
    \item Even when the out-of-sample test accuracy falls below 100\%, the maximum {\color{black}PMP-Gap} remains very low, never exceeding 1.8\%. 
    \item The proposed approach significantly outspeeds Algorithm \ref{alg:shooting}, with speed-ups reaching up to more than 26 million times. We also note that the relative speed-up increases with larger $n$ and $T$. This enhancement is attributed to the increased computational demand of solving problems with larger number of projects and longer time horizons.
    \item The training times are typically short, up to several hours in a personal laptop.  demonstrating the practicality of our approach.
    \item {\color{black}While the maximum {\color{black}PMP-Gap } increases as the noise level grows, it remains modest (below 5\%) when the perturbations are small.
    \item The training times for the fisheries control problem, especially for \(n=10\), are generally higher than other problems. This is likely due to the larger number of feature variables and distinct control vectors present in the training dataset for this problem.} 
\end{itemize}

\begin{table*}\centering
\caption{Experiment results for the machine maintenance problem.}
\ra{1.3}
\begin{tabular}{cccccc}
\toprule
$n$ & $T$ &  Training Time (min:s) & Speed-up & Accuracy & Max Suboptimality   \\ 
\midrule

5& \multirow{2}{*}{1}& 2:23& 47625  & 1.00 & 0.0000  \\

10&  & 4:50 & 93751 & 1.00 & 0.0000  \\

\hline

5&  \multirow{2}{*}{5}& 6:49 & $4.90 \times 10^5$ & 1.00 & 0.0000  \\

10&   &19:27& $3.23 \times 10^6$ & 0.98 & 0.0181  \\

\bottomrule
\end{tabular}
\label{table:machine}
\end{table*}

\begin{table*}\centering
\caption{Experiment results for the epidemic control problem.}
\ra{1.3}
\begin{tabular}{cccccc} 
\toprule
$n$ & $T$ &  Training Time (min:s) & Speed-up & Accuracy & Max Suboptimality   \\ 
\midrule

5& \multirow{2}{*}{1} & 7:33 & $2.10 \times 10^5$ & 0.99 & 0.0011  \\

10&  &12:33& $1.08 \times 10^6$ & 1.00 & 0.0000  \\

\hline
5& \multirow{2}{*}{5} & 7:04 & $8.60 \times 10^5$ & 0.99 & 0.0000  \\

10&  & 18:07 &  $2.65 \times 10^7 $& 1.00 & 0.0000  \\

\bottomrule
\end{tabular}
\label{table:epidemic}
\end{table*}

\begin{table*}\centering
\caption{Experiment results for the fisheries control problem.}
\ra{1.3}
\begin{tabular}{cccccc}
\toprule
$n$ & $T$ &  Training Time (min:s) & Speed-up & Accuracy & Max Suboptimality   \\ 
\midrule

5& \multirow{2}{*}{1}& 34:24 &  90653 & 0.98 & 0.0000  \\

10&  & 51:29 & $1.59 \times 10^5$ & 0.99 & 0.0000  \\

\hline

5&  \multirow{2}{*}{5}& 12:40 &  $3.90 \times 10^5$ & 0.99 & 0.0011  \\

10&  & 143:20 & $2.07 \times 10^6$ & 0.98 & 0.0111   \\

\bottomrule
\end{tabular}
\label{table:fisheries}
\end{table*}

\begin{table*}\centering
\caption{Summary of the training data for OCT-H. The Feature column reports the number of feature variables (after augmentation), and the Controls column reports the number of distinct control vectors.}
\ra{1.3}
\begin{tabular}{ccccc}
\toprule
Problem & $n$ & $T$ & Features & Controls  \\ 
\midrule

\multirow{4}{*}{Machine Maintenance}& 5 & 1 & 13  & 3    \\

& 5 & 5 & 15 & 6  \\

& 10& 1 & 24 & 4  \\

& 10& 5 & 27 & 15   \\

\hline

\multirow{4}{*}{Epidemic Control}& 5 & 1 & 21  & 6   \\

& 5 & 5 & 21 &  14 \\

& 10& 1 & 32 &  2  \\

& 10& 5 & 32 &  2 \\
\hline

\multirow{4}{*}{Fisheries Control}& 5 & 1 & 20  & 11  \\

& 5 & 5 & 19 & 6 \\

& 10& 1 & 37 & 21  \\

& 10& 5 & 37 & 22  \\

\bottomrule
\end{tabular}
\label{table:data_summary}
\end{table*}

\begin{figure}
    \centering
    \includegraphics[scale = 0.35]{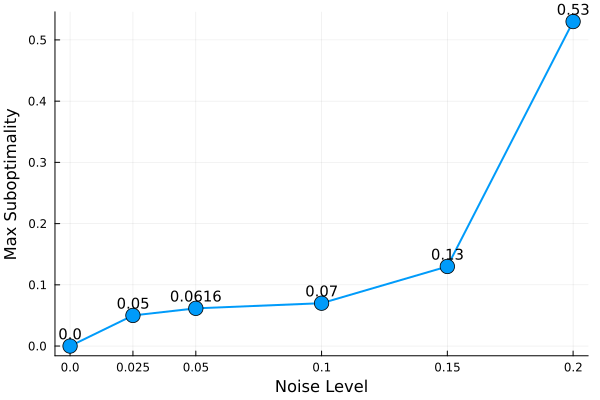}
    \caption{Evaluating robustness on the fisheries control problem. The plot illustrates the maximum {\color{black}PMP-Gap} of the proposed approach as the noise level increases.}
    \label{fig:robustness}
\end{figure}

\FloatBarrier
\section{Conclusions}
\label{sec:conclusion}

We have proposed a machine learning approach to learn a time-dependent state feedback policy for FRMABPs with affine and quadratic state equations. Instead of relying on black-box algorithms that automatically learn nonlinear patterns, we introduce a feature augmentation technique and use OCT-H. Our computational experiments demonstrate that this approach effectively learns high-quality policies for a variety of models across different sizes and time horizons.

Regarding limitations and future extensions, while our experiments demonstrate that the trained policies are reasonably robust to parameter perturbations, the proposed approach is currently designed for fixed parameter configurations. Developing a more flexible and efficient learning algorithm that can adapt to dynamic changes in parameter settings is an important direction for future work.  Additionally, extending our framework to more general fluid MDP models is a promising avenue for further research.

{\color{black}Furthermore, our work faces inherent theoretical limitations: for the models studied, neither the finiteness of the number of subintervals in the PMP extremals nor their global optimality is guaranteed a priori. Nevertheless, our numerical results are encouraging: the controls exhibit finitely many switches, the extremal solutions appear unique, and the computed PMP-Gaps are consistently small and nonnegative. A rigorous theoretical analysis remains necessary to formally establish these properties.}

%
\begin{appendices}

\section{Proofs}\label{sec:appendix_proof}

We prove that for the problems considered in Section \ref{subsec:desc}, the state constraints $0 < x_i(t) < H_i, \forall i \in [n],$ are automatically satisfied. Similar to the logic in the proof of Proposition \ref{pro:routingpositive}, we prove that in each interval $[t_s,t_{s+1})$ with a constant control $\bm{u}_s$, if $x_i(t_s) \in (0,H_i)$, then $x_i(t) \in (0,H_i), \forall t \in [t_s,t_{s+1})$. This implies that if $\bm{x}(0) \in (0,H_i)$, then $\bm{x}(t) \in (0,H_i), \forall t \in [0,T]$.

\begin{proposition}
\label{pro:mainpositive}

For the machine maintenance problem described in Section \ref{subsec:desc}, the state trajectory stays within $(0,1)^{n}$ regardless of the control trajectory.

\end{proposition}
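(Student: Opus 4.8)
The plan is to exploit the closed-form state trajectory within each interval of constant control, exactly as in the proof of Proposition \ref{pro:routingpositive}, and then invoke induction across consecutive intervals. First I would fix an interval $[t_s, t_{s+1})$ on which the control is the constant vector $\bm{u}_s \in \{0,1\}^n$, and write down the explicit solution of the scalar ODE $\dot{x}_i(t) = h_i(1-u_{s,i})(1-x_i(t))$ obtained from \eqref{eq:affineex} with $\alpha_{0,i} = h_i$, $\beta_{0,i} = -h_i$, $\alpha_{1,i} = \beta_{1,i} = 0$. Concretely, if $u_{s,i} = 1$ the dynamics are $\dot{x}_i(t) = 0$, so $x_i(t) = x_i(t_s)$ is constant and trivially remains in $(0,1)$; if $u_{s,i} = 0$ the dynamics are $\dot{x}_i(t) = h_i(1 - x_i(t))$, whose solution is
\begin{equation*}
x_i(t) = 1 - (1 - x_i(t_s)) e^{-h_i(t - t_s)}.
\end{equation*}

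Next I would read off the two bounds from this expression. Assuming $x_i(t_s) \in (0,1)$, we have $1 - x_i(t_s) \in (0,1)$, hence $(1 - x_i(t_s)) e^{-h_i(t-t_s)} \in (0, 1)$ for all $t \geq t_s$ (using $h_i \geq 0$, so the exponential factor lies in $(0,1]$), which gives $x_i(t) \in (0,1)$ throughout $[t_s, t_{s+1})$. By continuity of the state trajectory this also yields $x_i(t_{s+1}) \in (0,1)$ (or $=x_i(t_s)$ in the $u_{s,i}=1$ case), so the hypothesis of the next interval is met. I would then combine the two cases into the single statement: for any constant binary control on $[t_s,t_{s+1})$, $x_i(t_s) \in (0,1)$ implies $x_i(t) \in (0,1)$ for all $t \in [t_s,t_{s+1}]$.

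Finally I would close the argument by induction on the subinterval index $s$, using Proposition \ref{pro:optindxpol} to guarantee that $[0,T]$ decomposes into finitely many such intervals with constant binary control: since $\bm{x}(0) = \bm{x}_0 \in (0,1)^n$, the claim propagates from $s=0$ to $s = S-1$, giving $x_i(t) \in (0,1)$ for all $t \in [0,T]$ and all $i \in [n]$, which is the assertion of the proposition. The only mild subtlety — not really an obstacle — is handling the degenerate case $h_i = 0$ (then $x_i$ is constant on every interval, so the conclusion is immediate) and noting that the argument does not depend on optimality of the control, only on its being piecewise constant and $\{0,1\}$-valued; in fact the bound holds for any measurable control with values in $[0,1]$ by the same comparison argument, which is why the statement says ``regardless of the control trajectory.'' I do not anticipate any genuinely hard step here; the proof is essentially a one-line consequence of the closed-form solution together with the induction template already established for Proposition \ref{pro:routingpositive}.
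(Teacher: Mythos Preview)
Your proposal is correct and follows essentially the same route as the paper: both split on whether the control component is $1$ (equivalently $\beta_i(u_{s,i})=0$) or $0$, write the closed-form solution $x_i(t)=1-(1-x_i(t_s))e^{-h_i(t-t_s)}$ in the nontrivial case, read off the bounds, and propagate across subintervals by induction as in Proposition~\ref{pro:routingpositive}. The only cosmetic difference is that the paper displays two algebraically equivalent forms of $x_i(t)$ and uses one for the lower bound and the other for the upper bound, whereas you extract both bounds from the single expression; your added remarks on $h_i=0$ and on non-binary controls are harmless extras.
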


\begin{proof} 
We directly apply the results in \eqref{eq:affineex}. For the machine maintenance problem, $\beta_i(u_{s,i}) = - \alpha_i(u_{s,i})$. If $\beta_i(u_{s,i}) = 0$, then $x_i(t) = x_i(t_s) \in (0,1), \forall t \in [t_s,t_{s+1})$.  If $\beta_i(u_{s,i}) \neq 0$, then
\begin{equation*}
\begin{split}
{x}_i(t)  & = x_i(t_s)+ (1 - x_i(t_s))(1 - e^{-h_i(t-t_s)})\\
& = 1 - e^{-h_i(t-t_s)}(1 - x(t_s)).
\end{split}
\end{equation*}

The first equality proves that $x_i(t) > 0,$ and the second equality proves that $x_i(t) < 1, \forall t \in [t_s, t_{s+1})$. 
\end{proof}

\begin{proposition}
\label{pro:epidemicpositive}

For the epidemic control problem described in Section \ref{subsec:desc}, the state trajectory stays within $(0,1)^{n}$ regardless of the control trajectory.

\end{proposition}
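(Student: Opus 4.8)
The plan is to mimic the induction argument used in Propositions~\ref{pro:routingpositive} and~\ref{pro:mainpositive}: it suffices to show that on any subinterval $[t_s, t_{s+1})$ with constant control $\bm{u}_s$, if $x_i(t_s) \in (0,1)$ then $x_i(t) \in (0,1)$ for all $t \in [t_s, t_{s+1})$; continuity of the state trajectory at the breakpoints then propagates this from $[0,T]$ by induction, since $x_i(t_{s+1}) \in (0,1)$ feeds the next subinterval. So the whole argument reduces to analyzing a single scalar autonomous ODE with frozen control.

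On such an interval the dynamics for project $i$ read $\dot{x}_i = \lambda_i(u_{s,i})\, x_i\bigl(1 - \tfrac{\mu_i(u_{s,i})}{\lambda_i(u_{s,i})} - x_i\bigr)$, i.e. a logistic-type equation $\dot{x}_i = a x_i (b - x_i)$ where I abbreviate $a = \lambda_i(u_{s,i})$ and $b = 1 - \mu_i(u_{s,i})/\lambda_i(u_{s,i})$. The key structural facts I would use are: (i) $x_i \equiv 0$ is an equilibrium, so by uniqueness of solutions to this (locally Lipschitz) ODE a trajectory starting at $x_i(t_s) > 0$ can never reach $0$ — this gives the lower bound $x_i(t) > 0$; and (ii) for the upper bound I want to show the trajectory cannot cross $1$. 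Here I distinguish the intervention cases using the modeling assumptions $\lambda_{1,i} < \mu_{1,i}$ and $\lambda_{0,i} > \mu_{0,i}$: when $u_{s,i} = 1$ we have $a < 0$ and $b < 0$, so $\dot{x}_i = a x_i (b - x_i) < 0$ whenever $x_i \in (0,1)$ (both factors beyond $a$ are positive), hence $x_i$ is strictly decreasing and stays below its starting value; when $u_{s,i} = 0$ we have $a > 0$ and $0 < b < 1$, so the relevant equilibria are $0$ and $b < 1$, and a trajectory starting in $(0,1)$ is trapped in $(0,b]$ (it cannot cross the equilibrium at $b$, again by uniqueness), so it stays strictly below $1$. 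Alternatively, and perhaps more cleanly, I can just invoke the closed-form solution from~\eqref{eq:quadex} (with $\alpha_i(u_{s,i}) = a b$, $\beta_i(u_{s,i}) = -a$ in the notation there) and read off the sign and monotonicity directly, exactly as was done for the machine maintenance problem; but the equilibrium/uniqueness argument avoids case-chasing the constants $K, G$.

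The main obstacle is the upper bound in the inactive-intervention case $u_{s,i} = 0$: one must verify that $b = 1 - \mu_{0,i}/\lambda_{0,i}$ is indeed strictly less than $1$ (immediate from $\mu_{0,i} > 0$) and that the logistic flow started in $(0,1)$ genuinely cannot exceed $\max(x_i(t_s), b) < 1$ — this is where the comparison/invariance argument (no crossing of an equilibrium) does the real work, and it should be stated carefully since it is the only place the specific sign of $b$ enters. Everything else (the lower bound, the induction over subintervals, continuity at breakpoints) is routine and parallels the earlier proofs verbatim.
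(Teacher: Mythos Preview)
Your equilibrium/uniqueness argument is a legitimate and arguably cleaner alternative to the paper's proof, which instead invokes the closed-form logistic solution~\eqref{eq:quadex}, observes that $x_i(t)$ is monotone on each constant-control subinterval, and checks that its limit as $t\to\infty$ lies in $[0,1)$ (splitting on the sign of $\alpha_i(u_{s,i})$ rather than on the value of $u_{s,i}$). Your route avoids having to track the constant $K$ and read off monotonicity from the explicit formula; the paper's route avoids the separate lower/upper bound arguments and works uniformly in $u_{s,i}\in[0,1]$.

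Two points to fix. First, a sign slip: when $u_{s,i}=1$ your $a=\lambda_i(u_{s,i})=\lambda_{1,i}$ is a transmission rate and hence \emph{positive}, not negative; the factor $b-x_i$ is negative (since $b<0<x_i$), so $\dot{x}_i=a\,x_i(b-x_i)<0$ still holds, but the parenthetical ``both factors beyond $a$ are positive'' is wrong. Second, and more substantively, the statement asserts invariance for \emph{any} admissible control, and the paper's proof accordingly treats arbitrary constant $u_{s,i}\in[0,1]$ on each subinterval, whereas you only handle $u_{s,i}\in\{0,1\}$. Your argument extends easily once you note that $a=\lambda_i(u)>0$ for every $u\in[0,1]$ and $b=1-\mu_i(u)/\lambda_i(u)<1$ (since $\mu_i(u)>0$), so the equilibria are always $0$ and $b<1$ (or just $0$ if $b\le 0$), and the same uniqueness/no-crossing reasoning traps the flow in $(0,1)$; but as written your case split leaves a gap. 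Also, the claim that the trajectory is ``trapped in $(0,b]$'' is inaccurate when $x_i(t_s)>b$; you correct this later with $\max(x_i(t_s),b)$, but the earlier sentence should be amended.
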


\begin{proof} 

We directly apply the results in \eqref{eq:quadex} that
$$
x_i(t) = \frac{K\alpha_i(u_{s,i})e^{\alpha_i(u_{s,i})t}}{1 - K\beta_i(u_{s,i})e^{\alpha_i(u_{s,i})t}}.
$$

From the above expression, $x_i(t)$ is a monotone function of $t$ in this interval. If $\alpha_i(u_{s,i}) > 0$, then 
\begin{equation*}
\begin{split}
\lim_{t\to\infty}x_i(t)  & = -\frac{\alpha_i(u_{s,i})}{\beta_i(u_{s,i})}\\
& = 1 - \frac{u_{s,i}(\mu_0 - \mu_1) - \mu_0}{u_{s,i}(\lambda_0 - \lambda_1) - \lambda_0} \\
\end{split}
\end{equation*}
Due to the assumption that $\alpha_i(u_{s,i}) > 0$,
$$
u_{s,i}(\mu_0 - \mu_1) - \mu_0 > u_{s,i}(\lambda_0 - \lambda_1) - \lambda_0,
$$
where $u_{s,i}(\mu_0 - \mu_1) - \mu_0 = \mu_0(u_{s,i} - 1) - u_{s,i}\mu_1 < 0$. Hence, $-\frac{\alpha_i(u_{s,i})}{\beta_i(u_{s,i})} \in (0,1)$.

If $\alpha_i(u_{s,i}) < 0$, $\lim_{t\to\infty}x_i(t) = 0 $. This guarantees that $x_i(t) \in (0,1)$ in this interval.
\end{proof}

\begin{proposition}
\label{pro:fisheriespositive}

For the fisheries control problem described in Section \ref{subsec:desc}, the state trajectory stays within $(0,H_i)^{n}$ regardless of the control trajectory.

\end{proposition}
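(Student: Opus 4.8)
The plan is to follow the same inductive scheme used in Propositions \ref{pro:routingpositive}, \ref{pro:mainpositive}, and \ref{pro:epidemicpositive}: reduce to a single subinterval $[t_s, t_{s+1})$ on which the control is a constant vector $\bm{u}_s$, show that $x_i(t_s) \in (0, H_i)$ forces $x_i(t) \in (0, H_i)$ throughout that subinterval, and then invoke continuity of $x_i(\cdot)$ to carry the bound to $t_{s+1}$ and induction over the finitely many subintervals to reach all of $[0,T]$.

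On such a subinterval, using the fisheries parametrization $\beta_{0,i} = \beta_{1,i} = -r_i/H_i$, $\alpha_{0,i} = r_i$, $\alpha_{1,i} = r_i - q_i$, the state equation reads $\dot{x}_i(t) = x_i(t)\big(a_i + b_i x_i(t)\big)$ with $a_i := \alpha_i(u_{s,i}) = r_i - u_{s,i} q_i$ and $b_i := -r_i/H_i < 0$ (here $r_i > 0$ and $r_i \neq q_i$, consistent with the standing nonvanishing assumptions in the quadratic case). For the lower bound I would exploit the multiplicative structure: since $x_i \equiv 0$ solves this locally Lipschitz autonomous ODE, uniqueness forbids a trajectory started at $x_i(t_s) > 0$ from ever reaching $0$; equivalently, the integrating-factor representation $x_i(t) = x_i(t_s)\exp\!\big(\int_{t_s}^{t}(a_i + b_i x_i(\tau))\,d\tau\big)$ is manifestly positive.

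For the upper bound I would evaluate the vector field on the boundary $x_i = H_i$, where $\dot{x}_i = H_i(a_i + b_i H_i) = H_i(a_i - r_i) = -H_i u_{s,i} q_i \leqslant 0$; thus the field points into $(0, H_i)$ or is tangent to its upper endpoint, so no trajectory started in $(0, H_i)$ can cross $H_i$. Equivalently, and in the spirit of the proof of Proposition \ref{pro:epidemicpositive}, one can read everything off the closed form \eqref{eq:quadex}: $x_i(\cdot)$ is monotone on the subinterval, its equilibria are $0$ and $-a_i/b_i = H_i\big(1 - u_{s,i} q_i/r_i\big) \leqslant H_i$, and a monotone trajectory started strictly between consecutive equilibria (or, when $-a_i/b_i \leqslant 0$, a strictly decreasing trajectory started in $(0, H_i)$) stays in $(0, H_i)$.

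I do not expect a genuine obstacle. The only point needing care is the tangency of the vector field at $x_i = H_i$ in the case $u_{s,i} = 0$ (equivalently $q_i = 0$): there $H_i$ is itself an equilibrium, so a naive ``the field points strictly inward'' argument fails, and one must instead invoke uniqueness of solutions — or note that the closed-form solution approaches $H_i$ only as $t \to \infty$ — to conclude that a trajectory started below $H_i$ never attains it.
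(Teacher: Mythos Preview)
Your proposal is correct. You actually sketch two routes. The second one — read off the closed form \eqref{eq:quadex}, observe monotonicity, and locate the equilibria $0$ and $-a_i/b_i = H_i(1 - u_{s,i}q_i/r_i)$ — is precisely the paper's proof, which computes $\lim_{t\to\infty} x_i(t)$ in the two sign cases for $\alpha_i(u_{s,i})$ and concludes from monotonicity that the trajectory is trapped in $(0,H_i)$.

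Your primary route (ODE uniqueness at the equilibrium $x_i=0$ for the lower bound, and the sign of the vector field at $x_i=H_i$ for the upper bound) is a genuinely different, more qualitative argument. It has the advantage of not depending on the explicit solution formula and of making transparent why the conclusion still holds in the borderline case $u_{s,i}=0$, where $H_i$ is itself an equilibrium and the paper's monotone-limit argument needs the extra remark that the limit is only attained as $t\to\infty$. The paper's approach, on the other hand, dovetails with the rest of the manuscript, which has already invested in the closed forms \eqref{eq:quadex}, and mirrors the epidemic proof verbatim. Either is fine; your discussion of the tangency at $H_i$ is a detail the paper glosses over.
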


\begin{proof} 

The proof is almost identical to the case of epidemic control. Again, we directly apply the results in \eqref{eq:quadex} that
$$
x_i(t) = \frac{K\alpha_i(u_{s,i})e^{\alpha_i(u_{s,i})t}}{1 - K\beta_i(u_{s,i})e^{\alpha_i(u_{s,i})t}}.
$$

From the above expression, $x_i(t)$ is a monotone function of $t$ in this interval. If $\alpha_i(u_{s,i}) > 0$, then 
\begin{equation*}
\begin{split}
\lim_{t\to\infty}x_i(t)  & = -\frac{\alpha_i(u_{s,i})}{\beta_i(u_{s,i})}\\
& = \frac{H_i(r_i - q_iu_{s,i})}{r_i} \\
\end{split}
\end{equation*}
Due to the assumption that $\alpha_i(u_{s,i}) =  r_i - q_iu_{s,i} > 0$,
$\frac{H_i(r_i - q_iu_{s,i})}{r_i} \in (0,H_i)$.

If $\alpha_i(u_{s,i}) < 0$, $\lim_{t\to\infty}x_i(t) = 0 $. This guarantees that $x_i(t) \in (0,H_i)$ in this interval.
\end{proof}

\section{Additional Experiments}\label{sec:appendix_exp}

\subsection{Robustness Experiment}
\label{app:robustness}

In this appendix, we present the results of the second robustness experiment described in Section \ref{subsec:robust_exp}, conducted on the machine maintenance problem with \(n = 10\) and \(T = 1\). In the main experiment, the parameters \(\bm{h}\), \(\bm{C}\), \(\bm{L}\), and \(\bm{R}\) were uniformly sampled from the intervals \([0,0.5]^n\), \([1,3]^n\), \([2,4]^n\), and \([2,4]^n\), respectively. Here, we run three additional experiments using alternative parameter ranges: \(\bm{L}\) is sampled from \([0,1]\), \([4,5]\), and \([5,6]\), while \(\bm{R}\) is sampled from \([4,5]\), \([0,1]\), and \([5,6]\), respectively. 

Table \ref{table:machine_additional} reports these results and labels them as experiments 1, 2, and 3. Table \ref{table:data_summary_appendix} reports the summary of the training data. The outcomes are similar to those in Section \ref{subsec:exp_result}, with perfect training accuracy and zero {\color{black}PMP-Gap}. The relative speed-up is higher than the results reported in Table \ref{table:machine}, indicating that for the parameter ranges used in this experiment, Algorithm \ref{alg:shooting} requires a longer computation time. The training time for the Experiment 2 is noticeably higher than those for the other two experiments, likely due to the larger number of control vectors in the training data.

\begin{table*}\centering
\caption{Experiment results for the machine maintenance problem with different paremeter ranges.}
\ra{1.3}
\begin{tabular}{ccccc}
\toprule
Experiment &  Training Time (min:s) & Speed-up & Accuracy & Max Suboptimality   \\ 
\midrule

1 & 02:19 & $1.90 \times 10^5$  & 1.00 & 0.0000  \\

2  & 38:44 & $3.98 \times 10^5$ & 1.00 & 0.0000  \\

3  & 08:36 & $2.44 \times 10^5$ & 1.00 & 0.0000  \\

\bottomrule
\end{tabular}
\label{table:machine_additional}
\end{table*}

\begin{table*}\centering
\caption{Summary of the training data for OCT-H for the additional experiments using different parameter ranges. }
\ra{1.3}
\begin{tabular}{ccc}
\toprule
Experiment& Features & Controls   \\ 
\midrule

1& 22 & 2    \\

2& 23 & 15    \\

3& 22 & 5  \\

\bottomrule
\end{tabular}
\label{table:data_summary_appendix}
\end{table*}

{\color{black}

\subsection{Runtime of Algorithm \ref{alg:shooting}}
\label{app:shooting}

We evaluate the runtime of Algorithm \ref{alg:shooting} across different values of $n$ and $T$. Table \ref{table:runtime_summary} reports the average runtimes on the test instances from the main experiments in Section \ref{sec:exp}. Table \ref{table:runtimes} fixes $T=5$ and varies $n$ to further analyze runtime. For values of $n$ not included in the main experiments, we run Algorithm \ref{alg:shooting} on 100 instances and report the average runtime. The results show that as $n$ and $T$ increase, the runtime quickly becomes too large for real-time computation of optimal controls across states. In particular, both larger horizons $T$ and larger state dimensions $n$ significantly increase the computational burden.

\begin{table*}\centering
\caption{Average runtime (seconds) of Algorithm \ref{alg:shooting}.}
\ra{1.3}
\begin{tabular}{cccc}
\toprule
Problem & $n$ & $T$ & Runtime (seconds) \\ 
\midrule

Machine Maintenance & 5  & 1 & 4.86   \\
                    & 5  & 5 & 52.31  \\
                    & 10 & 1 & 9.84   \\
                    & 10 & 5 & 151.56 \\

\hline

Epidemic Control    & 5  & 1 & 28.33  \\
                    & 5  & 5 & 233.82 \\
                    & 10 & 1 & 125.93 \\
                    & 10 & 5 & 1547.49 \\

\hline

Fisheries Control   & 5  & 1 & 6.55   \\
                    & 5  & 5 & 175.97 \\
                    & 10 & 1 & 8.85   \\
                    & 10 & 5 & 409.75 \\

\bottomrule
\end{tabular}
\label{table:runtime_summary}
\end{table*}

\begin{table*}\centering
\caption{Runtime (seconds) for different problem types across different values of $n$ with $T = 5$.}
\ra{1.3}
\begin{tabular}{cccc}
\toprule
$n$ & Machine Maintenance & Epidemic Control & Fisheries Control \\
\midrule
4  & 38.81  & 239.27  & 56.25  \\
5  & 52.31  & 233.83  & 175.97 \\
7  & 93.68  & 621.90  & 190.68 \\
10 & 151.56 & 1547.49 & 409.75 \\
12 & 220.90 & 2880.41 & 468.13 \\
\bottomrule
\end{tabular}
\label{table:runtimes}
\end{table*}

}

\end{appendices}

\FloatBarrier
\section*{Statements and Declarations}

\begin{itemize}
\item \textbf{Funding:} Jos\'e Ni\~no-Mora’s work was supported in part by Universidad Carlos III de Madrid (UC3M) through an internal research program grant and a grant for the acquisition of research tools. He also gratefully acknowledges Dimitris Bertsimas for support of two summer research visits to MIT’s Operations Research Center, during which part of this work was carried out.
\item \textbf{Conflict of interest:} The authors have no competing interests to declare that are relevant to the content of this article.
\end{itemize}
%
%




%
\FloatBarrier
\bibliography{reference} 



\end{document}